\pdfoutput=1
\documentclass[11pt]{article}

\usepackage[final]{acl}

\usepackage{times}
\usepackage{latexsym}
\usepackage{amssymb}
\usepackage[most]{tcolorbox}
\tcbuselibrary{breakable,skins}
\usepackage{parcolumns}
\usepackage{subcaption}
\usepackage[T1]{fontenc}
\usepackage{graphicx} %% added
\usepackage{xspace} %% added
\usepackage{multirow}

\usepackage[utf8]{inputenc}

\usepackage{microtype}

\usepackage{inconsolata}
\usepackage{amsmath}
\usepackage{amsthm}
\newtheorem{theorem}{Theorem}
\usepackage{longtable}
\usepackage{graphicx}
\usepackage{booktabs}
\usepackage{algorithm}
\usepackage{algpseudocode}
\newcommand{\sm}[1]{\textcolor{red}{Sourav: #1}}

\newtheorem{definition}{Definition}
\usepackage{mathtools}

\title{Colorful Talks with Graphs: Human-Interpretable Graph Encodings for Large Language Models}
\author{
  Angelo Zangari \quad
 Peyman Baghershahi\quad
 Sourav Medya\\
  University of Illinois Chicago, Chicago, IL, USA \\
  \texttt{\{azang,pbaghe2,medya\}@uic.edu}
}

\newcommand{\baselineOne}{TLG-A}
\newcommand{\baselineTwo}{TLG-F}
\newcommand{\ourmethod}{\textsc{CL-OWL}\xspace}
\newcommand{\lowl}{\textsc{\textsc{L-OWL}}\xspace}
\newcommand{\cowl}{\textsc{\textsc{C-OWL}}\xspace}
\definecolor{PromptGray}{RGB}{248,248,248}
\definecolor{SysBlue}{RGB}{35,99,177}
\definecolor{UserGreen}{RGB}{22,128,72}
\definecolor{GraphPurple}{RGB}{111,66,193}
\definecolor{WlOrange}{RGB}{230,126,34}
\definecolor{ColorTeal}{RGB}{0,128,128}
\definecolor{TargetRed}{RGB}{192,57,43}

\newcommand{\PromptSection}[3]{%
  \begin{tcolorbox}[%
    enhanced, breakable,
    colback=PromptGray,
    colframe=#1,
    boxrule=0.7pt, arc=2pt,
    left=6pt,right=6pt,top=6pt,bottom=6pt,
    colbacktitle=#1,
    coltitle=white,
    title=\textbf{#2},
    fonttitle=\normalsize,
  ]
  \ttfamily\footnotesize\obeylines\obeyspaces
  #3
  \end{tcolorbox}%
}

\newenvironment{FullWidthPromptFigure}[2]{%
  \def\FWPFcaption{#1}%
  \def\FWPFlabel{#2}%
  \begin{figure*}[t]
  \centering
  \begin{tcolorbox}[%
    enhanced,
    width=\textwidth,
    colback=white,
    colframe=black!25,
    boxrule=0.6pt,
    arc=3pt,
    left=8pt,right=8pt,top=8pt,bottom=8pt,
    colbacktitle=black!10,
    coltitle=black,
    title=\textbf{#1},
    fonttitle=\normalsize,
  ]%
}{%
  \end{tcolorbox}
  \caption{\FWPFcaption}
  \label{\FWPFlabel}
  \end{figure*}%
}

\begin{document}
\maketitle

\begin{abstract}
Graph problems are fundamentally challenging for large language models (LLMs). While LLMs excel at processing unstructured text, graph tasks require reasoning over explicit structure, permutation invariance, and computationally complex relationships, creating a mismatch with the representations of text-based models. Our work investigates how LLMs can be effectively applied to graph problems despite these barriers. We introduce a human-interpretable structural encoding strategy for graph-to-text translation that injects graph structure directly into natural language prompts. Our method involves computing a variant of Weisfeiler–Lehman (WL) similarity classes and maps them to human-like color tokens rather than numeric labels. The key insight is that semantically meaningful and human-interpretable cues may be more effectively processed by LLMs than opaque symbolic encoding. Experimental results across multiple graph algorithms and predictive tasks show significant improvements from our method on both synthetic and real-world datasets. By capturing both local and global-range dependencies, our method enhances LLM performance, especially on graph tasks that require reasoning over global graph structure.
%Code is available at: https://gitlab.com/zyx7713/abc5545.git

%\begin{itemize}
%    \item Finish abstract and intro
 %   \item Related work: need headings
 %   \item Figure 1 - improve
%    \item Section 4: shorten
%    \item Section 4: refer example and prompt
%    \item section 4: adjust theory
%    \item experiments
%     \item experiments
%      \item experiments
%      \item finish conclusion
%\end{itemize}
%Rapid rise of llms. Have seen applications and established as leading models in several areas (nlp, vision, code intelligence).
%While work has been done to apply them also to graph, results are not as good as other fields. 
%Due to inherent graph complexities, structure.
%Additionally wide variety of graph tasks, from node class to hard ones, combinatorial. Limitation is that this requires reasoning on graph, which is non trivial.
%Existing work done on this, but doesnt leverage graph structure to convey to llm. but that is necessary to llm.
%in this work we introduce a method to infuse graph structure into llm, method is general, so can be applied to any graph dataset. evaluation done across different graph types and tasks, and across llm generations. datasets both syntetically generated and also sampled from existing dataset widely used in graphml commyunity (cora, citeseer, pubmed, obgn-arxiv, ...)
\end{abstract}

\section{Introduction}

Large language models (LLMs) have advanced rapidly over the last few years, demonstrating substantial improvements in reasoning, tool use, and multimodal capabilities across successive generations. These advances have enabled LLMs to be deployed in domains traditionally dominated by specialized models, including program synthesis, scientific reasoning, and structured problem solving \cite{deepseekr12025}. Recently, the emergence of agentic LLM systems, capable of interacting with tools, APIs, and external environments via standardized protocols, has further blurred the boundary between general-purpose language models and domain-specific reasoning systems. Together, these developments raise a natural question: \textit{to what extent can LLMs serve as general solvers for structured reasoning tasks that have historically required specialized architectures?}

%Large language models (LLMs) have advanced rapidly in the last three years, demonstrating substantial improvements in reasoning, tool use, and multi-modal capabilities across successive generations. Recently, LLMs have been deployed in domains traditionally dominated by specialized models, and the emergence of agentic systems has further blurred the boundary between general-purpose language models and domain-specific reasoning systems. These developments raise a natural question: \textit{To what extent can LLMs replace or complement specialized architectures such as graphs for structured reasoning tasks?}

Graph problems provide a particularly revealing testbed for this question. Unlike text or images, graphs are combinatorial objects with no canonical ordering, and many graph tasks require permutation invariance, symmetry awareness, and reasoning over multi-hop relational structure \cite{wangMicrostructuresAccuracyGraph2024c}. Moreover, a large class of canonical graph problems (e.g., subgraph isomorphism) are NP-hard or NP-complete \cite{manchanda2020gcomb,ranjan2022greed}, placing substantial demands on systematic reasoning and abstraction. As a result, graph tasks expose a fundamental mismatch between graph structure and the sequential, text-based representations that LLMs are trained to process \cite{positiongfm2024}.

%Graph problems remain a notable challenge in this context. Unlike text or images, graphs are combinatorial objects with no canonical ordering, and many graph tasks require permutation invariance, symmetry awareness, and integration of both local attributes and global relational structure. Moreover, many canonical graph problems---such as graph coloring, subgraph isomorphism, and Hamiltonian path---are NP-hard or NP-complete, placing substantial demands on reasoning and abstraction. As a result, graph tasks expose a fundamental mismatch between graph structure and the sequential, text-based representations LLMs are trained to process.

Recent work has explored whether LLMs can nevertheless be applied to graph problems by translating graphs into text. Early studies show that LLMs can answer simple queries such as node degree, edge existence, and local neighborhood questions when graphs are serialized into natural language prompts \cite{guoGPT4GraphCanLarge2023,wangCanLanguageModels2024}. Subsequent work expanded this line of inquiry to include path finding, cycle detection, and limited forms of combinatorial reasoning \cite{fatemiTalkGraphEncoding2023,sanfordUnderstandingTransformerReasoning2024}. In parallel, several approaches have proposed augmenting graph-to-text translations with additional structure, such as similarity cues, embeddings, or symbolic annotations, to better guide LLM reasoning \cite{liuOneAllTraining2024b,wangMicrostructuresAccuracyGraph2024c,sunLargeLanguageModels2025}.

Despite this progress, existing works remain limited. Most studies focus on a simple set of tasks, typically emphasizing local or classification-style problems such as node classification or link prediction. Evaluations are often restricted to small graphs, leaving open questions about scalability and task generality. Furthermore, as closed-source LLMs cannot be retrained on graph data, most methods rely on prompt engineering or representational choices where theoretical grounding is unclear. In particular, many graph-to-text encodings introduce symbols or numeric identifiers that are structurally meaningful but linguistically opaque.

\paragraph{Our Contributions.} Our work is motivated by the hypothesis that human-interpretable structural representations can help bridge the gap between graph structure and LLM reasoning. LLMs are pretrained and aligned on vast corpora of human language and are therefore biased toward representations that resemble human explanatory conventions. We posit that graph structure, when expressed through interpretable and semantically grounded cues, may be more effectively exploited by LLMs than arbitrary numeric or symbolic encodings. Our contributions span representation design, theoretical motivation, and rigorous experiments with various tasks and data. They are as follows.

\noindent\textit{(i) Human-interpretable structural encoding}: 
We introduce a graph-to-text encoding strategy that injects explicit structural information into LLM prompts using human-interpretable tokens derived from Weisfeiler-Leman (WL) refinement. This proposed representation preserves permutation invariance and structural similarity while aligning graph structure with linguistic abstractions familiar to pretrained LLMs.

\noindent
\textit{(ii) Structure-preserving representation}: We provide a principled analysis connecting ordered WL refinement to distance-weighted notions of node connectivity. Under mild assumptions, we show that the induced WL ordering is consistent with a broad class of centrality-like measures.

\noindent
\textit{(iii) Experiments}: We conduct a comprehensive empirical study across diverse graph families and task types to assess the effectiveness of human-interpretable structural encoding.

\section{Related Work}

%LLMs have rapidly become a general-purpose tool for a wide range of tasks that traditionally required specialized algorithms or domain-specific models. As their capabilities expanded, researchers began to investigate whether LLMs could also be used to solve problems defined on graphs, structures that encode relationships, connectivity, and higher-order patterns. This line of work is motivated by the observation that LLMs can process arbitrary text, and that graphs can, in principle, be translated into textual form. If this translation is effective, LLMs could potentially act as drop-in solvers for graph queries without requiring model-specific retraining.

%We begin with the methods which are motivated by the observation that LLMs can process arbitrary text, and that graphs can be translated into textual form. %If this translation is effective, LLMs could potentially act as solvers for graph queries without requiring retraining. 
We begin with methods based on the idea that graphs can be represented as text and processed by LLMs. Initial studies \cite{fatemiTalkGraphEncoding2023,guoGPT4GraphCanLarge2023,wangCanLanguageModels2024} explored basic tasks such as node degree, edge existence, and simple structural queries. Some have broadened the scope to include limited forms of combinatorial reasoning. LLMs can often answer small or local queries reliably, but performance tends to degrade as graphs become larger or tasks require multi-step reasoning \cite{fatemiTalkGraphEncoding2023}. %Despite these limitations, the possibility of using generic LLMs for graph reasoning has made this an active and  growing research area.

%\subsection{Different Applications of LLMs to Graph Tasks}
\textit{Different Ways of Applying LLMs on Graphs.}
Another set of approaches provide LLMs an additional structure or guidance, such as augmenting graph descriptions with similarity cues, embeddings, or human-interpretable features. Prior work shows that LLMs can play several distinct roles within graph-related pipelines.  Existing methods fall broadly into three categories: \textit{(i) Solvers}: LLMs directly compute answers to tasks such as degree queries, path finding, triangle counting, etc., using only text prompts \cite{wangCanLanguageModels2024,guoGPT4GraphCanLarge2023,fatemiTalkGraphEncoding2023,zhangLLM4DyGCanLarge2024,sunLargeLanguageModels2025}. \textit{(ii) Aligners}: LLMs guide or interface with graph-learning models, enforce constraints, or translate natural-language instructions into graph-processing steps \cite{jinGraphChainofThoughtAugmenting2024,zhuParameterEfficientTuningLarge2024,tanGraphorientedInstructionTuning2025,liAreLargeLanguage2025}. \textit{(iii) Encoders}: LLMs convert graphs into detailed textual representations, embeddings, or rationales that downstream modules or humans can interpret \cite{zhaoGraphTextGraphReasoning2023b,fatemiTalkGraphEncoding2023,liuOneAllTraining2024b,zhuLLMGNNGraph2025}. Please refer to these recent surveys \cite{jinLargeLanguageModels2024,renSurveyLargeLanguage2024c}.

Recent representative systems instantiate these directions in different ways. GraphText \cite{zhaoGraphTextGraphReasoning2023b} performs graph reasoning in text space by constructing graph-aware textual contexts for LLM inference. OFA \cite{liuOneAllTraining2024b} instead trains a unified graph model across classification tasks, and LLaGA \cite{chenLLaGALargeLanguage2024c} combines language models with graph-side representations through a graph assistant architecture. These methods show that graph structure can benefit from textual, learned, or hybrid graph-language representations. Our work is complementary: rather than training a graph-language model or injecting continuous embeddings, we study a training-free graph-to-text encoding based on ordered WL descriptors and natural-language color tokens. This lets us isolate how discrete, human-interpretable structural cues affect LLM graph reasoning under controlled prompts.

%Solvers: LLMs directly compute answers to tasks such as degree queries, pathfinding, triangle counting, etc., using only text prompts \cite{wangCanLanguageModels2024,guoGPT4GraphCanLarge2023,fatemiTalkGraphEncoding2023,zhangLLM4DyGCanLarge2024,sunLargeLanguageModels2025}.

%Aligners: LLMs guide or interface with graph-learning models, enforce constraints, or translate natural-language instructions into graph-processing steps \cite{jinGraphChainofThoughtAugmenting2024,zhuParameterEfficientTuningLarge2024,tanGraphorientedInstructionTuning2025,liAreLargeLanguage2025}.

%Encoders: LLMs convert graphs into detailed textual representations, embeddings, or rationales that downstream modules or humans can interpret \cite{zhaoGraphTextGraphReasoning2023b,fatemiTalkGraphEncoding2023,liuOneAllTraining2024b,zhuLLMGNNGraph2025}.

%\subsection{Using LLMs as graph task solvers}

\textit{Methods as Solvers using LLMs.}
A growing line of work examines LLMs used directly as solvers for graph queries.  These methods vary along three orthogonal dimensions such as graph types, graph tasks, and prompt formats. Many solver-oriented studies evaluate LLMs on real-world text-attributed graphs such as Cora, Citeseer, PubMed, and the OGB datasets \cite{yeLanguageAllGraph2024,qinDisentangledRepresentationLearning2024,liuOneAllTraining2024b,heUniGraphLearningUnified2025}. In parallel, several benchmarks for explicit graph-in-text reasoning rely on synthetic graphs with controlled topology, such as NLGraph or G-Recall \cite{wangCanLanguageModels2024,wangMicrostructuresAccuracyGraph2024c,sanfordUnderstandingTransformerReasoning2024}. These datasets offer meaningful semantic labels, but their size and irregular structure introduce challenges for text-based models. Beyond these, a substantial number of works also experiment with synthetic graphs because their structure is easier to control and analyze \cite{wangCanLanguageModels2024,wangMicrostructuresAccuracyGraph2024c,sanfordUnderstandingTransformerReasoning2024}. When graphs are fully serialized into a single prompt, evaluations almost always restrict to small graphs since LLM performance declines quickly as the number of nodes or edges grows.

%each graph instance to small sizes, typically 5–30 nodes, since LLM performance declines quickly as the number of edges grows.

%\textbf{Scope and limitations of previous work.}
%Other existing works focus on a wide spectrum of graph tasks: local structural queries (degree, neighbors, edge existence); mid-level tasks (connectivity, cycle detection); and more demanding algorithmic problems (shortest paths, triangle counting, subgraph detection). Early studies concentrated on classification-style tasks, where the model labels nodes or predicts edge existence \cite{guoGPT4GraphCanLarge2023,yeLanguageAllGraph2024,qinDisentangledRepresentationLearning2024,liuCanWeSoft2024c,liuOneAllTraining2024b}. More recent papers probe LLMs with combinatorial queries requiring multi-step reasoning \cite{wangCanLanguageModels2024,fatemiTalkGraphEncoding2023,zhangLLM4DyGCanLarge2024,sanfordUnderstandingTransformerReasoning2024,wangLLMsZeroshotGraph2024}. Across these settings, performance decreases sharply when tasks involve global coordination or systematic search. This limitation is particularly relevant for our work, which targets tasks where structure must be understood rather than memorized.

Additional details on related work have been provided in Appendix \ref{app:rel-work}.
%Regarding the process of the graph to text translation itself, previous work follows four not-mutually-exclusive approaches, as reported in appendix \ref{app:rel-work}.

%\newpage
%\section{Our Approach \sm{writing For paper}}

\begin{figure*}[h!]
    \centering
    \includegraphics[width=\textwidth]{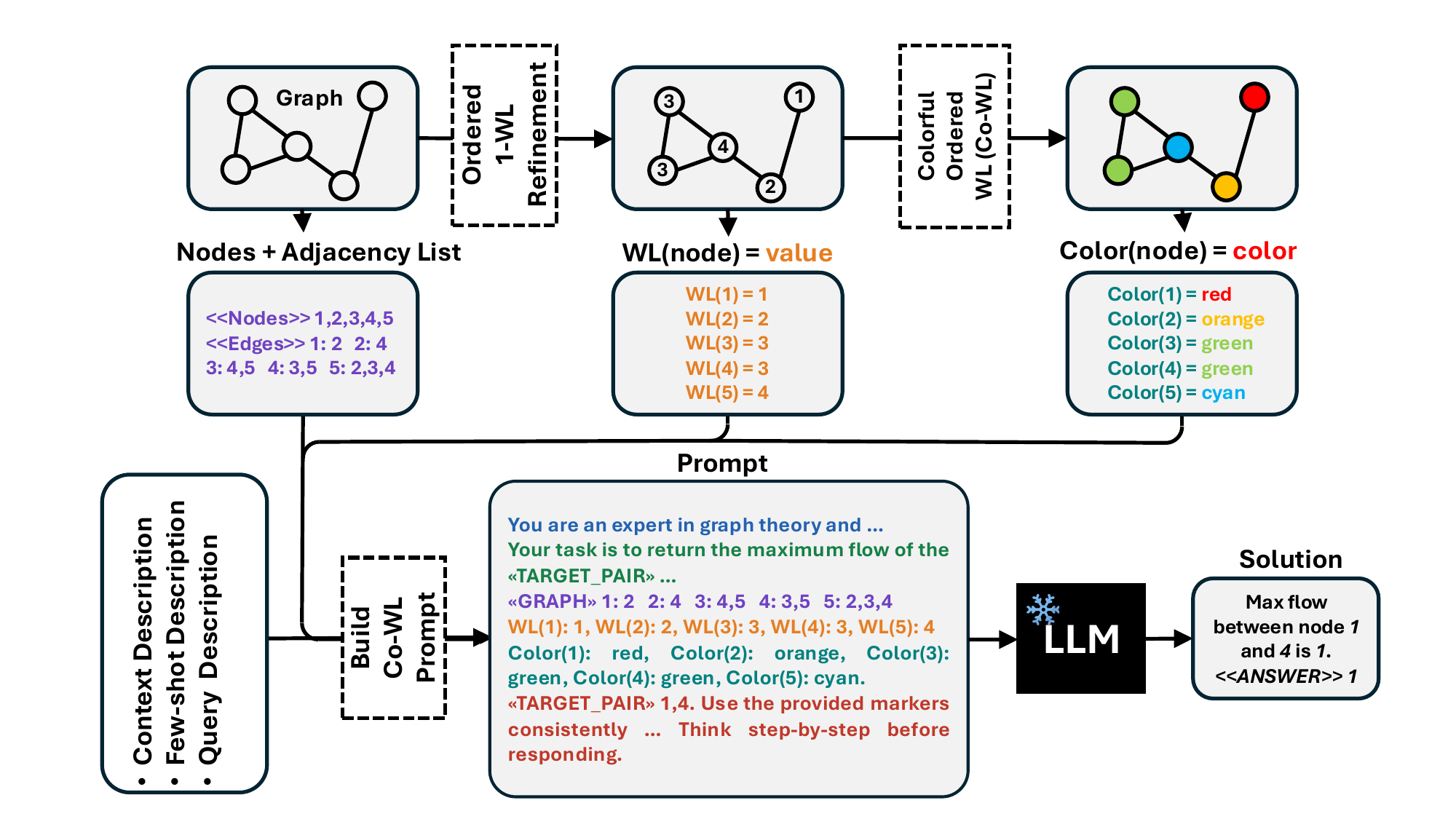}
    \caption{Overview of the proposed Colorful Ordered Weisfeiler-Leman (\ourmethod) pipeline. A graph is first represented as nodes with adjacency lists, then processed using ordered 1-WL refinement to compute node-level structural labels. These labels are mapped to human-tangible color tokens and embedded into a natural-language prompt. The resulting prompt enables an LLM to perform graph reasoning tasks (e.g., maximum-flow).}
    \label{fig:pipeline}
    \vspace{-2mm}
\end{figure*}
%\az{where should we place system diagram?}

\section{The Problem Setup}
Unlike GNNs, LLMs are sequence models: they consume and produce text, and have no native interface for graphs. Using an LLM to solve a graph problem therefore requires a \emph{graph-to-text translation} step. The graph, the task description, and any auxiliary information must be encoded as a textual sequence. The central objective of our work is to design a principled graph-to-text representation method that enables LLMs to reason about graphs.%, including those requiring non-trivial structural understanding.

\textbf{Formulation.} Solving a graph problem with an LLM requires transforming a structured, non-sequential object into a textual sequence. We formalize the elements involved in this transformation and highlights the key factors that influence overall effectiveness. Let $G = (V, E)$ be a graph with node set $V$ and edge set $E \subseteq V \times V$, and let $q \in Q$ denote a graph task (e.g., cycle detection, reachability between two nodes). LLMs operate over a set of text sequences $W \subset \Sigma$ where $\Sigma$ is a finite alphabet, and can be viewed as a function $f_\theta : W \rightarrow W'$ where $W' \subset \Sigma$ , parameterized by $\theta$. 

We translate the graph and task into a textual prompt via a translation function $\tau : X \rightarrow W$, where $X$ encompasses all information supplied to the model: graph structure, auxiliary structural information, task description, and prompt scaffolding. This transformation may not be unique. Subsequently, for an input query $q$ (a graph task) the model prediction becomes:
\begin{equation}
    y = f_\theta\big(\tau(G, q, x_{\text{struct}})\big).
\end{equation}

%This formulation enables the LLM to only processes the textual representation constructed by $\tau$ instead of the graph directly. \pb{This is the only way and obvious; otherwise, the LLM cannot be fed by the graph. So you can remove this sentence. }
Based on this formulation, the overall effectiveness of a method depends on two major components: \textbf{(1) Information content} $x_{\text{struct}}$:  the structural information from the graph.  
    This may range from an edge list to richer structural summaries.  \textbf{(2) Translation strategy} $\tau$:  
    how the information is expressed in text.  
    For example, one can include ordering of edges, natural-language descriptions, few-shot examples, explicit markers for labels or colors, and the level of verbosity. %\textbf{(3) Model choice} $f_\theta$:  
    %the specific LLM, given substantial differences in reasoning ability, context handling, and robustness across model classes. \pb{This is repeated. Remove the previous part talking about this. Read the comments I put above for that section. If you are talking about the effectiveness of the model, then ``Model Choice'' could be part of it, but still it's obvious information. You can say: ``Given a model $f_{\theta}$. the performance depends on ...}
These components interact strongly. For example, adding more structural information improves expressiveness but may increase prompt length and complexity; likewise, stronger models may benefit more from richer encoding. %\todo{refer to experiments} %\pb{Do we have any analysis on how the performance changes by changing the model and why? If we don't have that, remove after ``likewise.''}
%\az{on the \textbf{how} yes, on the \textbf{why} no}

%\az{maybe add note that also llm $f_\theta$ can be varied but outside scope of method - although obviously important and interesting for experimental results}

\label{sec:method}
\section{Our Proposed Method}

%\todo{merge two versions of section 4 summary}

%\subsection*{Overview}
We introduce node-level structural identifiers based on Weisfeiler-Leman (WL) \cite{wlmain2011} refinement and map into a human-interpretable form using colors. Figure \ref{fig:pipeline} shows the overview of our method. %We show that our refined WL labels have connections with node centrality.

\subsection{Node-level Structural Identifiers ($x_{\text{struct}}$)}
\label{sec:method_structural_augmentation}

Many graph problems require reasoning beyond local neighborhood. For example, cycle detection, shortest paths, or flow-related tasks depend on multi-hop neighborhoods, symmetries, and recurring substructures. Therefore, identifiers that are created from global structural information enable better graph comprehension, and thus, becomes beneficial for all downstream tasks. Graph learning models (e.g., GNNs) aggregate a node's neighborhood information through message passing to generate structure-aware embeddings, such that similar nodes attain similar embeddings. However, such latent embeddings have no meaning to a language model and appear as arbitrary numerical values.

% Many graph problems require reasoning beyond local adjacency. Tasks such as cycle detection, reachability, shortest paths, or flow-related queries depend on non-local structural patterns, including multi-hop neighborhoods, symmetries, and recurring substructures. When solving such problems with LLMs, this structural information must be made explicit: the model cannot reliably infer it from a raw edge list alone, especially as graph size grows or structural patterns become less obvious.
% In classical graph machine learning, structural information is captured through learned representations. Graph neural networks, for example, compute node embeddings by aggregating information from neighboring nodes through message passing. These embeddings encode structural similarity: nodes occupying similar roles in the graph tend to have similar representations. However, such latent vectors are learned in a model-specific embedding space and have no intrinsic meaning outside the generating architecture.

To inject structural information, some prior approaches directly use the continuous GNN embeddings into the LLM prompts for graph problems \cite{gspell2025, chenLLaGALargeLanguage2024c}. Other methods replace node identifiers with alternative vocabularies or descriptive placeholders \cite{fatemiTalkGraphEncoding2023}. These strategies suffer from a common limitation. The injected symbols do not carry interpretable information for the LLM. Floating-point embeddings are tied to a specific model and training process, while renamed node identifiers remain arbitrary tokens whose relationships are not grounded in the model’s linguistic prior knowledge.
% When LLMs are applied to graph problems, prior work has attempted to transfer this idea by injecting structural information into the prompt. Some approaches include embedding continuous GNN representations directly in text form \cite{gspell2025, chenLLaGALargeLanguage2024c}, while others replace node identifiers with alternative vocabularies or descriptive placeholders \cite{fatemiTalkGraphEncoding2023}. These strategies suffer from a common limitation: the injected symbols do not carry interpretable meaning for the LLM. Floating-point embeddings are tied to a specific model and training process, while renamed node identifiers remain arbitrary tokens whose relationships are not grounded in the model’s linguistic prior.\pb{Are there previous works that use the exact embedding floating values? In the prompt?e}

% par. introduction and motivation of our general $x_{aug}$
To address these limitations, we construct node-level structural descriptors that: 1) are \emph{structure-aware} to allow capturing global topological information, 2) preserve \emph{structural similarity} to assign identical descriptors to nodes that are structurally similar, 3) acknowledge an \emph{ordered interpretation} to enable comparison between nodes based on the extent of their connectivity. These descriptors can be reliably used by LLMs.%These properties are advantageous for a kind of graph translation that an LLM can reliably exploit.

% Our goal is therefore to construct a node-level structural descriptor that satisfies three requirements. First, it should be \emph{structure-aware}, capturing multi-hop topological information rather than purely local features. Second, it should preserve \emph{structural similarity}, assigning identical descriptors to nodes that occupy equivalent structural roles. Third, it should admit an \emph{ordered interpretation}, enabling comparison between nodes based on the strength or extent of their structural connectivity. These properties are essential for downstream translation into a form that an LLM can reliably exploit.

\paragraph{Constructing a structural descriptor.}
%Weisfeiler--Lehman refinement as a structural descriptor.}
We adapt or refine the Weisfeiler-Leman (WL) \cite{wlmain2011} coloring algorithm which is a heuristic for graph isomorphism test. The algorithm iteratively refines node labels by aggregating information from neighborhoods. At each iteration, a node updates its label based on its current label and the multiset of labels of its neighbors. After $k$ iterations, the resulting label summarizes the node’s $k$-hop neighborhood structure. %\sm{we could give an example}%\pb{I modified this paragraph. But replace the whole paragraph with a technical formulation of the WL method instead of a verbal definition.}

% \paragraph{Weisfeiler--Lehman refinement as a structural descriptor.}
% The Weisfeiler--Lehman (WL) color refinement algorithm provides a natural starting point. Originally introduced as a heuristic for graph isomorphism testing, WL iteratively refines node labels by aggregating information from local neighborhoods. At each iteration, a node updates its label based on its current label and the multiset of labels of its neighbors. After $k$ iterations, the resulting label summarizes the node’s $k$-hop neighborhood structure \cite{wlmain2011}.

% par. meaning associated to wl labels
%WL is beneficial for our purpose as it is task- and input-agnostic, producing node-level descriptors that reflect structural similarity. Nodes with identical WL labels are indistinguishable and thus have equivalent structural information. However, 
WL labels in their standard form are poorly suited for direct use in LLMs. The labels are arbitrary integers produced by hashing operations; their numeric values have no semantic meaning, no inherent ordering, and no interpretable relation to each other. This misalignment is particularly problematic for LLMs that are pre-trained on natural language patterns and struggle with reasoning over such arbitrary integers or opaque identifiers.

To address the above limitation, we modify the WL method to induce an ordering over node labels that are informative for LLMs. The key idea is to constrain the aggregation step so that label updates reflect not only structural similarity but also a notion of cumulative structural connectivity.

% To address the above limitation, we modify the WL refinement \pb{Why do you call it refinement?} to induce an interpretable ordering over node labels. The key idea is to constrain the aggregation step so that label updates reflect not only structural equivalence but also a notion of cumulative structural connectivity.

%%%%%%%%%%%%%---Specifically, during aggregation, we sort incoming messages first by increasing neighborhood size and then lexicographically before the label digest \sm{didn't understand the second part of the sentence}. This deterministic ordering ensures that nodes with comparably larger or more complex neighborhoods receive labels that are consistently ordered over successive WL iterations. Hence, the final labels provide better structural encoding that goes beyond simple degree and acknowledges global graph context through iterative refinement. %\pb{I revised the paragraph but it's repeated in the ``Design Objective'' paragraph.}

% Specifically, during aggregation, we sort incoming messages first by increasing neighborhood size and then lexicographically before computing the new label digest. This deterministic ordering ensures that nodes whose neighborhoods grow larger or more complex over successive iterations receive labels that are consistently ordered relative to less connected nodes. Under these constraints, the final labels can be interpreted as encoding a generalized notion of structural connectivity that extends beyond simple degree while still incorporating global graph context through iterative refinement.

%\sm{adding this: p, please check...}
\subsubsection*{Ordered 1-WL refinement}
We compute node-level structural descriptors using a refinement procedure of the canonical
1-dimensional Weisfeiler-Leman (1-WL)  \cite{wlmain2011}. Let us initialize node labels $\ell_v^{(0)} \in \mathbb{N}$ (e.g., $\ell_v^{(0)}=1$ for all $v\in V$). Let $\mathcal{N}(v)$ be the self-exclusive neighborhood function s.t. $\mathcal{N}(v)=\{u: (v, u) \in E\}$. For iterations $t=0,1,\dots,T-1$, update each node label by aggregating the multiset of
neighbor labels and applying a deterministic, order-preserving canonicalization:
\begin{equation*}
\ell_v^{(t+1)} \;=\; \mathrm{ID}\!\left(
\ell_v^{(t)},\;
\operatorname{Sort}\!\left(\{\!\{\,\ell_u^{(t)} : u \in \mathcal{N}(v)\,\}\!\}\right)
\right)
\label{eq:ordered-1wl}
\end{equation*}
Here $\{\!\{\cdot\}\!\}$ denotes a multiset and $\operatorname{Sort}(\cdot)$ converts it to a tuple that is sorted lexicographically (thus sorted in ascending numerical order and preserving multiplicities). The function $\mathrm{ID}(\cdot)$ assigns
a unique integer identifier to each distinct pair
$\big(\ell_v^{(t)}, \operatorname{Sort}(\cdot)\big)$ \emph{in a globally deterministic order},
by (i) collecting all node messages
$m_v^{(t)}=\big(\ell_v^{(t)}, \operatorname{Sort}(\{\!\{\ell_u^{(t)}:u\in\mathcal{N}(v)\}\!\})\big)$,
(ii) sorting the set of unique messages lexicographically, and (iii) mapping the $i$-th message
in this sorted list to identifier $i$. These labels form our node-level structural prompt augmentation $x_{\text{struct}}$.

\subsubsection*{Ordered 1-WL \& Centrality}
We formalize the relationship between our \emph{ordered} 1-WL refinement and standard notions of node centrality. We define centrality as follows.

\begin{definition}[Distance-shell counts and truncated connectivity]
For $k> 0$, define the $k$-th distance shell of $v$ as
$S_k(v)=\{u\in V : d(u,v)=k\}$, where $d(\cdot,\cdot)$ is shortest-path distance.
For a weight sequence $\alpha_0\ge \alpha_1\ge \dots \ge \alpha_T > 0$, define the truncated
distance-weighted connectivity
\[
C_T(v) \;=\; \sum_{k=0}^{T} \alpha_k\,|S_k(v)|.
\]
\end{definition}

\begin{theorem}%[Local consistency with degree and multi-hop neighborhood growth]
\label{thm:wl-centrality}
Let $\ell^{(t)}$ be the labels produced by ordered 1-WL on $G$.

\smallskip
\noindent\textbf{(1) Degree consistency.}
For any nodes $v,w\in V$,
$\deg(v)>\deg(w) \;\Longrightarrow\; \ell_v^{(1)} > \ell_w^{(1)}$.

\smallskip
\noindent\textbf{(2) Shell-dominance implies label dominance.}
Fix $T\ge 1$. Suppose there exist nodes $v,w\in V$ such that 
(i)$|S_k(v)| \;\ge\; |S_k(w)| \text{ for all } k=1,\dots,T,$ and (ii)
$|S_{k^\star}(v)| \;>\; |S_{k^\star}(w)| \text{ for some } k^\star\le T,$
and additionally the rooted $T$-hop neighborhoods of $v$ and $w$ are \emph{tree-unfoldings}
(i.e., have no collisions in the breadth-first expansion up to depth $T$).
Then $\ell_v^{(T)} \;>\; \ell_w^{(T)}
$ holds.

\smallskip
\noindent\textbf{(3) Relation with connectivity.}
With (2), for any nonincreasing positive weights $\{\alpha_k\}_{k=0}^T$,
$C_T(v) \;>\; C_T(w).$ Hence, ordered 1-WL induces an ordering that is consistent with a class of distance-weighted
connectivity functions on locally tree-like neighborhoods.
\end{theorem}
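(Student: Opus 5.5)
The plan is to make precise what order the ordered 1-WL labels induce, and then read off all three parts from it. First I would establish a \emph{monotonicity lemma}: by construction the message $m_v^{(t)}=(\ell_v^{(t)},\operatorname{Sort}(\cdot))$ has the node's own previous label as its first coordinate. Because $\mathrm{ID}$ numbers the lexicographically sorted distinct messages in increasing order, it is strictly order-preserving on messages. Hence $\ell_v^{(t)}>\ell_w^{(t)}$ implies $\ell_v^{(t+1)}>\ell_w^{(t+1)}$, and equal labels at step $t$ are then compared through their sorted neighbor tuples. So the order at step $T$ is the lexicographic order on the sequence of successive message refinements.

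Part (1) is then a direct computation. Since $\ell^{(0)}\equiv 1$, the message of $v$ at $t=0$ is $(1,(1,\dots,1))$ with $\deg(v)$ ones. Under lexicographic comparison, a tuple that is a proper prefix of another is smaller. So $\deg(v)>\deg(w)$ gives $m_v^{(0)}>m_w^{(0)}$, and therefore $\ell_v^{(1)}>\ell_w^{(1)}$.

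For part (2) I would argue by induction on $T$, using the tree-unfolding hypothesis so that $|S_k(v)|$ equals the number of depth-$k$ vertices in the rooted unfolding. This lets the label $\ell_v^{(t)}$ be identified with the isomorphism class of the depth-$t$ rooted tree, ordered lexicographically. Let $k_0$ be the smallest $k$ with $|S_k(v)|>|S_k(w)|$. If $k_0=1$, part (1) gives $\ell_v^{(1)}>\ell_w^{(1)}$, and the monotonicity lemma propagates this strict inequality up to step $T$. If $k_0>1$, the degrees are equal. The plan is to compare the sorted tuples of children's labels at step $k_0-1$, using the inductive hypothesis applied to children, which are again roots of tree-unfoldings of depth $k_0-1$.

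This last case is the main obstacle. Knowing only that the \emph{sum} of child subtree sizes at depth $k_0$ is larger does not force the sorted child tuple to be lexicographically larger, because the lexicographic order compares the \emph{smallest} children first. For instance, with equal degrees, $v$ could have one leaf child and one very large child, while $w$ has two medium children. I would therefore first try to strengthen the induction into an elementwise statement. If the sorted child profiles of $v$ dominate those of $w$ coordinatewise, each pairwise comparison follows from the inductive hypothesis, and the sorted tuples compare correctly. If this cannot be derived from the aggregate shell inequalities, I expect to state this coordinatewise dominance of child shell profiles as the precise form of the tree-like assumption under which (2) holds.

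Part (3) is then immediate. Since $|S_0(v)|=|S_0(w)|=1$ and every $\alpha_k>0$, conditions (i) and (ii) give
\[
C_T(v)-C_T(w)=\sum_{k=1}^T\alpha_k\big(|S_k(v)|-|S_k(w)|\big)\ \ge\ \alpha_{k^\star}\big(|S_{k^\star}(v)|-|S_{k^\star}(w)|\big)>0.
\]
Combined with (2), this shows that the label order and $C_T$ agree for such pairs.
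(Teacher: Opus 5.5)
Your part (1), the monotonicity lemma, the case $k_0=1$ of (2), and the summation in (3) are correct and match the paper's argument. The obstacle you raise for $k_0>1$ is not a gap in your plan, though. It is a failure of statement (2) itself, and you should commit to refuting it rather than hedging. Take $T=2$ and let $G$ be the disjoint union of two pieces: the path with vertices $c',c,w,e,e'$ (in this order), and a tree in which $v$ is adjacent to a leaf $a$ and to a vertex $b$ carrying three further leaves. Both $2$-balls are trees, $|S_1(v)|=|S_1(w)|=2$ and $|S_2(v)|=3>2=|S_2(w)|$. By (1), vertices of degree $1,2,4$ receive $\ell^{(1)}=0,1,2$. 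Hence $m_v^{(1)}=(1,(0,2))<(1,(1,1))=m_w^{(1)}$, so $\ell_v^{(2)}<\ell_w^{(2)}$. This is exactly your ``leaf plus large child versus two medium children'' scenario. The paper's sketch passes over precisely this step. It asserts that at the first depth where the rooted trees differ, the sorted child tuple of $v$ strictly dominates that of $w$ lexicographically. But an ascending lexicographic comparison is decided at the first position where the tuples differ, i.e.\ by the smaller children, not by the total number of descendants.

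Your fallback (coordinatewise dominance of sorted child shell profiles) fails for the same reason one level down. The inductive hypothesis is silent when paired children have \emph{equal} profiles, and equal profiles do not force equal or correctly ordered labels. For instance, let $v,w$ have degree $2$, with children $u_1,u_2$ and $u_1',u_2'$ respectively, all of degree $3$. Give $u_1$ further neighbours of degrees $1$ and $3$, give $u_2$ further neighbours of degrees $3$ and $3$, and give each of $u_1',u_2'$ further neighbours of degrees $2$ and $2$. All remaining vertices are leaves and both $3$-balls are trees. Then $u_1$ and $u_1'$ have identical shells, $u_2$ strictly dominates $u_2'$, and $|S_3(v)|=6>4=|S_3(w)|$. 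Yet $\ell_v^{(2)}=\ell_w^{(2)}$ and $\ell^{(2)}_{u_1}<\ell^{(2)}_{u_1'}=\ell^{(2)}_{u_2'}<\ell^{(2)}_{u_2}$, so $\ell_v^{(3)}<\ell_w^{(3)}$. (Also, ordered 1-WL aggregates over all neighbours including the parent, so a child's label is not a function of the subtree below it alone.)

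By your own monotonicity lemma, $\ell_v^{(T)}>\ell_w^{(T)}$ holds iff $v$'s ascending tuple of neighbour labels is lexicographically larger at the first iteration where the two labels separate. Any correct version of (2) must assume something that controls this comparison directly, and shell counts cannot do that. Finally, as your computation shows, $C_T(v)>C_T(w)$ in (3) follows from (i)--(ii) and positivity of the $\alpha_k$ alone. The asserted consistency with the WL order, however, inherits the failure of (2).
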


The proof is given in Appendix \ref{app:proof}. Theorem~\ref{thm:wl-centrality} provides a connection between ordered 1-WL labels and
distance-weighted connectivity. This shows the benefits of using 1-WL refinement as structural identifiers. 
% \sm{OLD para}We employ the one-dimensional WL refinement (1-WL) \sm{cite 1-WL} with these ordering constraints. The resulting labels retain the WL's similarity-preserving property while providing an ordered interpretation that reflects increasing structural complexity. These labels form our node-level structural prompt augmentation $x_{\text{struct}}$, that is fully deterministic, task-agnostic, and depends only on the input graph. In addition, the WL labels have a \textit{monotonic structural scale}, i.e., the labels have structural meaning, and lower graph connectivity implies lower label values, making the label scale monotonic with respect to connectivity \sm{give an example}. %\pb{Explained the monotonic structural scale based on how x refers to this property later. Not sure if the definition is known with a different term in the literature.}

% We employ the one-dimensional Weisfeiler--Lehman refinement (1-WL) with these ordering constraints, iterating until label stabilization. The resulting labels retain the classical WL property of structural equivalence while additionally admitting an ordered interpretation that reflects increasing structural complexity. These ordered WL-derived labels form our node-level structural augmentation $x_{\text{struct}}$.

\subsection{Translating Structural Information into Text ($\tau$) \& \ourmethod Prompting}
\label{sec:method_translation}

To make $x_{\text{struct}}$ usable by an LLM, it must be translated into text via the translation function $\tau(G, q, x_{\text{struct}})$. As discussed before, arbitrary integers or symbolic codes are generally suboptimal for conveying structural meaning to LLMs. %Such tokens lack semantic grounding in natural language and provide limited cues for relational reasoning. 
Meanwhile, LLMs are effective at leveraging representations that resemble natural explanatory language. An effective translation strategy should therefore express structural information in a form that is both faithful to the underlying graph properties and suitable to exploit the linguistic priors of LLMs.

% The structural augmentation $x_{\text{struct}}$ defined in the previous section produces ordered, node-level labels that summarize graph structure. To make this information usable by an LLM, it must be translated into text via the translation function $\tau(G, q, x_{\text{struct}})$. This step is critical: LLMs cannot directly exploit latent representations or abstract numeric encodings unless they are expressed in a form aligned with the model’s training distribution.

% Although LLMs are capable of processing numerical tokens, arbitrary integers or symbolic codes are generally a suboptimal medium for conveying structural meaning. Such tokens lack semantic grounding in natural language and provide limited cues for relational reasoning. In contrast, LLMs are extensively trained on descriptive, human-authored text across many domains and are particularly effective at exploiting representations that resemble natural explanatory language. An effective translation strategy should therefore express structural information in a form that is both faithful to the underlying graph properties and compatible with the linguistic priors of the model.
% \pb{LLMs can generally interpret the data, even if it's numerical, but this method is suboptimal. So wirte the second chalenge again and also add: Since LLMs are trained with massive natural language data of different domains, an optimal method must exploit this information effectively}.

%\paragraph{Design objective.}
%Let $L_v$ be the derived WL labels. 
Our objective is to map these labels into a textual representation that preserves key properties: similarity-preserving, order-preserving, and conciseness. Nodes with similar structural features should have similar representations. For example, nodes with higher structural connectivity should be distinguishable from those with lower connectivity in a consistent and interpretable manner. %\pb{Revised the paragraph but it's repeated with a paragraph before---check the above comments.}

% Given the ordered WL-derived labels $L_v$ \pb{First time introducing the $L(.)$ function}, our objective is to map them into a textual representation that preserves two key properties: similarity and order. Nodes with identical or similar structural roles should be described in similar terms, while nodes with higher structural connectivity should be distinguishable from those with lower connectivity in a consistent and interpretable way. At the same time, the representation should remain simple, compact, and easily integrated into a textual prompt.

\paragraph{Colors as an interpretable similarity space.}
To achieve the above, we map WL-derived labels into colors. Colors provide a natural similarity space that is widely used in human communication to group and distinguish entities. In natural language, colors carry intuitive relational meaning (e.g., “red is closer to orange than blue”), and LLMs are well-equipped to exploit such associations. %\sm{can we cite any paper on LLM and colors?} 

% To meet these requirements, we map WL-derived labels into colors and express these colors explicitly in text. Colors provide a natural similarity space that is widely used in human communication to group and distinguish entities. In natural language, colors carry intuitive relational meaning (e.g., “red is closer to orange than to blue”), and LLMs are well-equipped to exploit such associations.

From a representational perspective, colors also admit an ordered interpretation. By restricting attention to a one-dimensional segment of the color spectrum, variations in hue can be associated with a monotonic structural scale derived from the ordered WL labels. This allows us to encode both equivalence and graded similarity in a way that remains human-interpretable. Figure \ref{fig:fullpagefigure} shows an example. %Please see the next section (Colorful-WL Structure descriptor) for details.

\begin{figure*}[h!]
\vspace{-2mm}
    \centering
    \includegraphics[width=.9\textwidth]{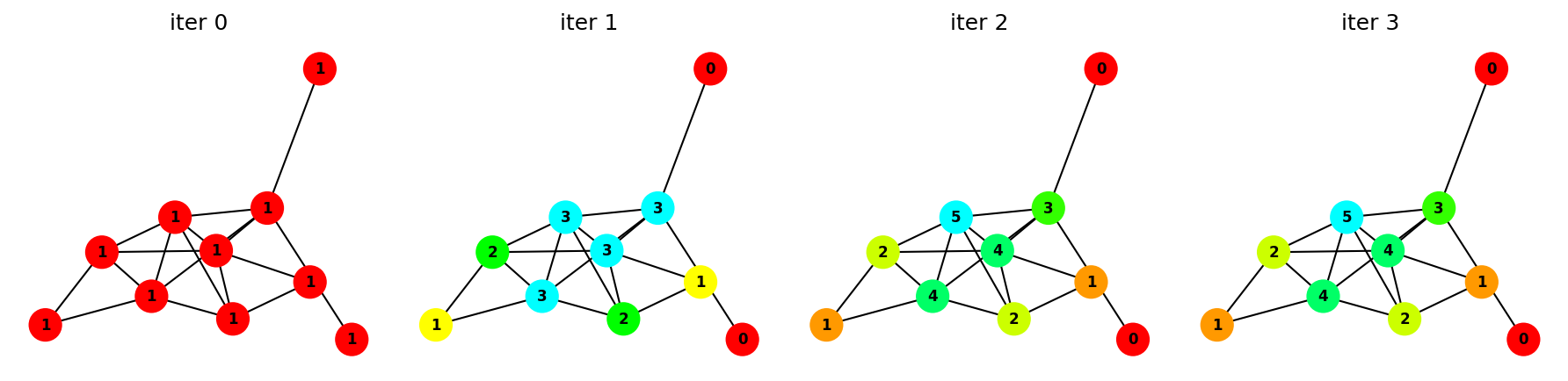}
    \caption{Iterative refinement of WL labels and their mapping to human-interpretable colors, showing how structural equivalence classes evolve as increasingly larger neighborhoods are incorporated.}
    \label{fig:fullpagefigure}
    \vspace{-2mm}
\end{figure*}

\subsubsection*{Colorful-WL Prompting (\ourmethod)}
Here we formally explain how our \textbf{Colorful-ordered-WL (\ourmethod)} translator $\tau$ constructs the LLM prompts. The main input to the LLM is a graph task $T \subset U$, where $U \in V \times E \times \mathbb{R}$ is a general set of graph problems/tasks, e.g. shortest path prediction. Each task is a set of tuples of query $q$ and answer $a$ as $T = \{(Q, S(V_s, E_s), a): S(V_s, E_s) \in V \times E, \, a\in \mathbb{R}, \, Q\subset V_s\}$. Note that $S(V_s, E_s)$ is a graph and not necessarily a subgraph of $G$, but we have $V_s \in V$ and edges $E_s \in E$. Also, $Q$ could be a singleton set, $\{u\} \subset V_s$, (e.g. in node classification) or a subset of nodes, $\{u_i\}_{i=1}^{m} \subset V_s$, (e.g., shortest path where $m=2$). Given a tuple $T^{(i)}=(Q^{(i)}, S(V_s^{(i)}, E_s^{(i)}), a^{(i)}) \in T, \, \forall i\in [|T|]$, our translator incorporates multiple channels of information in the prompts as follows:

\paragraph{1. Context descriptor.} A descriptor $\tau_c: W\times V \times E \rightarrow W$ uses the input subgraph $S(V_s^{(i)}, E_s^{(i)})$ along with prespecified graph notation, task definition, and LLM characterization templates to provide a rich description as the task descriptor generates a textual sequence $P_c^{(i)}=\tau_c(S(V_s^{(i)}, E_s^{(i)}))$.

\paragraph{2. Colorful-WL Structure descriptor.} A descriptor $\tau_{\text{cwl}}$ leverages the structural information collected from $S(V_s^{(i)}, E_s^{(i)})$ into the prompt. Specifically, let $WL(.)$ be the WL mechanism which generates labels $L_u = WL(u) \in \mathbb{Z}^+$ for $\forall u \in V_s^{(i)}$, and let $\bar{L}_u \in [0, 1]$ be the normalized labels such that:
$$
  \bar{L}_u = \frac{L_u - \min_{v \in V} L_v}{\max_{v \in V} L_v - \min_{v \in V} L_v},
$$
We use a color mapping function $\Psi: [0, 1] \rightarrow W_c$ to obtain natural language color labels $C_v=\Psi(\bar{L}_v)$, we use the, where $W_c \subset W$ is a subset of natural language color words like \{red, pink, yellow, ...\}. To exemplify, choosing the RGB color spectrum, we can have $\Psi = \Psi_{\text{NLC}} \circ \Psi_{\text{RGB}}$, where $\Psi_{\text{RGB}}: [0, 1] \rightarrow [0, 1]^3$ which gives RGB channels color as $\Psi_{\text{RGB}}(\bar{L}_u) \rightarrow (R_u, G_u, B_u)$, and $\Psi_{\text{NLC}}: [0, 1]^3 \rightarrow W$ which maps the RGB channels to natural language color tokens. The structure descriptor generates a textual sequence $P_{\text{cwl}}^{(i)}=\tau_{\text{cwl}}(S(V_s^{(i)}, E_s^{(i)}))$. 

Notably, we express colors using natural language descriptors rather than alphanumeric encodings such as hexadecimal color codes. Unlike numeric labels or hexadecimal color codes, natural-language color tokens correspond to semantically grounded embeddings learned during pretraining, enabling the model to exploit similarity relationships already encoded in its representation space.  

\paragraph{3. Few-shot Guidance descriptor.} A few-shot builder $\tau_e$ is utilized to sample a set of queries and answers from the task such that $I^{(i)} \subset T$. Specifically, we use a wrapper $r(.)$ to put the few-shot examples in a fixed template. Next, the builder puts all the wrapped examples with some description to generate $P_e^{(i)}=\tau_e(r(I^{(i)}))$.

\paragraph{4. Query descriptor.} The final component of the prompt is the query descriptor $\tau_{q}$. It states the problem along with the wrapped query as $P_q^{(i)}=\tau_q(r(\{Q\}))$.

Our \ourmethod translator mixes all the above components and makes the suitable structure-aware prompts as $P^{(i)}=\tau(Q^{(i)}, S(V_s^{(i)}, E_s^{(i)}))$, which could be a simple concatenation as $P^{(i)}=[P_c^{(i)}\|P_{\text{cwl}}^{(i)}\|P_e^{(i)}\|P_q^{(i)}]$. 

\paragraph{Compressed prompting for localized queries.}
The same translator supports a localized prompt construction. For a query node set $Q^{(i)}$ and a hop budget $k_n$, we define the retained node set
\[
R_{k_n}(Q^{(i)})=\{v\in V_s^{(i)}:\min_{u\in Q^{(i)}} d(v,u)\le k_n\}.
\]
When $k_n=-1$, no node filtering is applied. Otherwise, the prompt serializes only the induced subgraph on $R_{k_n}(Q^{(i)})$ and emits WL labels or color descriptors only for retained nodes. This node-level filtering is independent of the choice of textual encoding: it can be applied to the TLG-style baseline, to \lowl, to \cowl, or to \ourmethod. It therefore separates two effects that are otherwise confounded in long prompts: the amount of graph context given to the LLM and the type of structural descriptor used for the retained context.

\paragraph{Properties of the translation.}
This color-based translation is structure-, similarity-, and order-preserving, while remaining human-interpretable and aligned with LLM inductive biases. The computed labels and colors are the output of the translation function $\tau$ and are injected into the prompt. Examples of the prompt structure are in Appendix \ref{app:prompt-struct}.

\begin{table}[ht]
\centering
\scriptsize

\resizebox{\columnwidth}{!}{\begin{tabular}{lll}
\toprule
\textbf{Task} & \textbf{Time complexity} & \textbf{Objective} \\ \midrule
%\multicolumn{3}{c}{\textbf{Graph Algorithmic Tasks}} \\ \midrule
%Edge Count & \(O(1)\) (stored) & Counting \\   
%Node Count & \(O(1)\) (stored) & Counting \\   
Triangle Counting & \(O(m^{3/2})\) & Local Pattern Counting \\
%Connected Nodes & \(O(\deg(v))\) & Local Structural \\ 
%Node Degree & \(O(\deg(v))\) & Local Structural \\   
%Edge Existence & \(O(\deg(u))\) & Local Structural \\   
%Disconnected Nodes & \(O(n + \deg(v))\) & Global Structural \\ 
Cycle Check & \(O(n + m)\) & Global Structural Reasoning \\   
Reachability & \(O(n + m)\) & Global Connectivity Reasoning \\   
Shortest Path & \(O(n + m)\) & Combinatorial Reasoning \\   
Maximum Flow & \(O(n\,m^{2})\) & Optimization \\ \midrule
%\multicolumn{3}{c}{\textbf{Graph Predictive Tasks}} \\ \midrule
%Node Classification & \(O(1)\) & Predictive \\   
%Pairwise Node Classification & \(O(1)\) & Predictive \\   
%Link Prediction & \(O(\deg(u))\) & Predictive \\   
\end{tabular}}
\vspace{-1mm}
\caption{Task time complexity and characteristics (with \(n = |V|\), \(m = |E|\)) for algorithmic tasks. We also use node classification as a predictive task.
\vspace{-4mm}
\label{tab:task_complexity}}
\end{table}

\section{Experimental Results}
\label{exp:intro}
We evaluate whether ordered WL structural descriptors and their color-based translation improve LLM-based graph reasoning and prediction. %We first test core reasoning quality, then analyze why human-aligned colors help, study behavior under prompt compression, evaluate transfer to prediction tasks, and finally examine long-range, large-graph, and supporting analyses.
Code is available at: \url{https://github.com/angelozangari/CL-OWL}

\subsection{Setup}
\label{subsec:exp_setup}
The following contains major details on experimental setups. Additional details are in Appendix \ref{app:setup_details}. %for more information. 
%This section describes the experimental setup used to evaluate the augmentation methods introduced in Chapter~\ref{chap:GS}.

\textbf{Graph Tasks.} We evaluate our methods on a variety of local and global algorithmic tasks requiring different types of reasoning. 
 Table \ref{tab:task_complexity} shows the time complexity of the optimal algorithmic solver for each task. We also evaluate \textbf{node classification} as the most common graph predictive task. 

\textbf{Datasets.} For algorithmic tasks (Table \ref{tab:task_complexity}), we use synthetic graphs of multiple types, namely: Erd\H{o}s–R\'enyi, Barab\'asi–Albert, and Path. For each type, we generate graphs with sizes from \(n=5\) to \(n=100\)---beyond this, we observed that even strong LLMs (e.g., GPT-4-class) maintain zero accuracy on our hardest tasks. For node classification, we sample subgraphs from real-world datasets, specifically Cora, Citeseer, PubMed, \cite{yang2016revisiting} and OGBN-ArXiv \cite{hu2020open}. Additional details are in Appendix \ref{app:setup_details}.

%Each graph folder contains three files: (i) \texttt{data.json}: edge list, solutions for all tasks, WL labels, colors, and prompts for all variants. (ii) \texttt{.graphml} file for interoperability. (iii) \texttt{.png} visualization showing WL labels and colors by iteration.
%All generation is seeded for reproducibility.

%Other than algorithmic tasks, our experiment also extend to predictive tasks, more specifically to node classification (whose time complexity it $O(1)$). For the latter task, we sample subgraphs from real-world datasets, specifically Cora, Citeseer, PubMed, and OGBN-ArXiv.

\textbf{Baselines. } The recent work, Talk-Like-a-Graph (TLG) \cite{fatemiTalkGraphEncoding2023}, introduces various types of methods for graph-to-text translation by graph traversal. Each method employs a distinct vocabulary of words for textual node encoding, along with a special edge encoding. We compare our method against two variants. 1) \textbf{TLG-A}: node indices for node encoding and tuples representation for edges, and 2) \textbf{TLG-F}: Friends series' characters for node encoding and tuples for edge representation. For node classification, we further compare against three recent graph-language baselines: GraphText \cite{zhaoGraphTextGraphReasoning2023b}, LLaGA \cite{chenLLaGALargeLanguage2024c}, and OFA \cite{liuOneAllTraining2024b}. We use the official GraphText inference pipeline, the official released LLaGA checkpoint, and the official OFA codebase with the paper-reported supervised training setting for node classification (100 epochs, learning rate $10^{-4}$, batch size 128). \textit{We use GPT-3.5 as the base LLM for the experiments unless specified otherwise.} %\pb{Check for correctness of the encodings and change accordingly.}

\textbf{Variations of our Method. } We evaluate three variants of our method against the baselines as follows: (i) \textbf{\lowl:} Pure (Only) WL-based Labels; (ii) \textbf{\cowl:} Pure (Only) WL-based Colors; (iii) \textbf{\ourmethod:} WL-based Labels $+$ Colors.

\subsection{Graph Reasoning Tasks}
%\az{can we instead call it: Core Reasoning Performance}
\label{exp:quality}
We evaluate our methods on graph algorithmic tasks. We sample 200 graphs for each task of sizes 10-30 nodes from Barabasi-Albert (BA) and Erdos-Renyi (ER) graph types following the setup in \cite{fatemiTalkGraphEncoding2023,wangMicrostructuresAccuracyGraph2024c}. To avoid non-trivial tasks, we make graphs sufficiently connected by setting an edge probability of $p=0.2$ for ER graphs and a minimum number of four edges per node for BA graphs. These specific graph types are closer to real-world graphs compared to simpler structures, such as complete or path graphs. 
%are the scale-free network (BA) and the stochastic block model (ER), which

Results are shown in Table \ref{tab:algorithmic-tasks-gpt4o-agg}. For challenging graph tasks, our proposed WL-enriched prompting (with/without colors) achieves significant improvements over the baselines. This empirically verifies our claim, as including the WL-labels provides information from long-range dependencies that are accessible and allow reasoning on these tasks that require global structural knowledge. The coloring variants (\cowl and \ourmethod) show superior performance over the pure-WL methods. Regarding graph-level triangle counting, the WL-enriched prompts underperform the baselines. We conjecture that exact global triangle counting is closer to exhaustive local pattern matching over all triples, where extra descriptors can lengthen the prompt without directly identifying the counted motifs. In Section \ref{exp:local-motifs}, we therefore separately evaluate a localized triangle-membership task and find that WL-based cues improve local motif recognition.

\begin{table*}[ht]
\vspace{-2mm}
\centering
\small
\setlength{\tabcolsep}{5pt}
\begin{tabular}{lcccccccc}
\toprule
& \multicolumn{2}{c}{Cycle Check} & \multicolumn{2}{c}{Maximum Flow} & \multicolumn{2}{c}{Shortest Path} & \multicolumn{2}{c}{Triangle Counting} \\
\cmidrule(lr){2-3}\cmidrule(lr){4-5}\cmidrule(lr){6-7}\cmidrule(lr){8-9}
Variant & Acc.\ (\%) & MAE $\downarrow$ & Acc.\ (\%) & MAE $\downarrow$ & Acc.\ (\%) & MAE $\downarrow$ & Acc.\ (\%) & MAE $\downarrow$ \\
\midrule
\baselineOne{} & 89.50 & 0.105 & \underline{33.33} & 0.494 & 87.80 & 0.041 & \textbf{16.00} & \underline{0.456} \\
\baselineTwo{} & 91.50 & 0.085 & \textbf{36.59} & 0.445 & 83.74 & 0.050 & 11.50 & 0.523 \\
\lowl{} & 92.00 & 0.080 & \textbf{36.59} & 0.475 & \textbf{91.87} & \textbf{0.025} & 14.00 & 0.508 \\
\cowl{} & \underline{92.50} & \underline{0.075} & \textbf{36.59} & \underline{0.443} & \underline{88.62} & \underline{0.039} & 14.00 & \textbf{0.454} \\
\ourmethod{} & \textbf{93.00} & \textbf{0.070} & \textbf{36.59} & \textbf{0.438} & 86.99 & 0.042 & \underline{14.50} & 0.469 \\
\bottomrule
\end{tabular}
\caption{Results on graph algorithmic tasks averaged over BA and ER datasets using \texttt{gpt-4o} (additional results are in Appendix \ref{app:additional_results}). \ourmethod generally achieves superior performance over the baselines. Overall, the color-based variants (\ourmethod and \cowl) demonstrate more improvements compared to others.}
\label{tab:algorithmic-tasks-gpt4o-agg}
\vspace{-4mm}
\end{table*}

\subsection{Structural Information Compression}
\label{exp:compression-scalability}
The ordered WL labels and their color-based translation expose graph-level structural information in a compact node-level form. %To verify this claim we perform  prompt compression.
%A central claim of our method is that ordered WL labels and their color-based translation expose graph-level structural information in a compact node-level form. This means the descriptors should remain useful even when the serialized graph context is aggressively truncated, and their advantage should become more apparent as graph size increases. We test this claim from two complementary angles: prompt compression and large-graph scalability.
To verify this claim we first evaluate the compressed prompting variant introduced in Section~\ref{sec:method_translation}. The idea is to retain only the induced subgraph within a fixed hop budget around the query node while preserving WL-derived descriptors for the retained nodes. If the descriptors indeed summarize non-local structural information, then they can compensate for the loss of explicit graph context. We test this on Cora for node classification task using 50-node subgraphs. For the TLG-A baseline, we vary the retained neighborhood from 1-hop to 3-hop around the target node; for WL-based variants, we use only the 1-hop neighborhood. Table~\ref{tab:compressed-nodecls} shows that the baseline degrades sharply under this restriction (accuracy of 30-45\%). In contrast, \cowl{} reaches 75.0\% accuracy with only 1-hop context, and \ourmethod{} reaches 70.0\% while achieving the best MAE and RMSE. Thus, the WL-based variants outperform the baseline while observing strictly less serialized graph context, indicating that the descriptors encode information that the baseline must otherwise recover from a larger neighborhood.

\begin{table}[t]
\centering
\small
\setlength{\tabcolsep}{5pt}
\begin{tabular}{lccc}
\toprule
Variant & Acc.\ (\%) & MAE $\downarrow$ & RMSE $\downarrow$ \\
\midrule
\baselineOne{} (1-hop) & 30.0 & 0.700 & 0.837 \\
\baselineOne{} (2-hop) & 40.0 & 0.600 & 0.775 \\
\baselineOne{} (3-hop) & 45.0 & 0.550 & 0.742 \\
\cowl{} (1-hop) & \textbf{75.0} & 0.211 & 0.459 \\
\lowl{} (1-hop) & 50.0 & 0.500 & 0.707 \\
\ourmethod{} (1-hop) & \underline{70.0} & \textbf{0.176} & \textbf{0.420} \\
\bottomrule
\end{tabular}
\caption{Compressed prompting on Cora node classification. The hop count specifies the retained neighborhood around the target node before serialization.}
\vspace{-3mm}
\label{tab:compressed-nodecls}
\end{table}

We observe similar phenomenon in prompt length. Table~\ref{tab:prompt-length-scalability} reports the mean number of characters for compressed node-classification prompts on Cora across graph sizes from 50 to 1000 nodes and for two graph types. On BA graphs, the baseline prompt length at $n=1000$ is about $4\times$ that of the WL-based variants. 
% decreases from 1705906 characters for the baseline to 362409--368649 characters for the WL-based variants, corresponding to about $4\times$ reduction . 
On ER graphs, the reduction is even larger (nearly $15\times$). The effect is strongest on sparse graphs, where a 1-hop retained subgraph remains small while the full serialization continues to scale with graph size. This shows that the benefit of WL-based prompting is not merely predictive but also representational.%: the descriptors preserve useful structural information while greatly reducing the amount of text that must be passed to the LLM.

\begin{table}[t]
\centering
\small
\setlength{\tabcolsep}{5pt}
\begin{tabular}{llcccc}
\toprule
Graph & Method & $50$ & $100$ & $500$ & $1000$ \\
\midrule
\multirow{4}{*}{BA}
& \baselineOne{}   & 4442 & 15122 & 390949 & 1705906 \\
& \cowl{}          & \underline{2595} & \underline{6148} & \underline{188525} & \underline{364108} \\
& \lowl{}          & \textbf{2518} & \textbf{5943} & \textbf{187435} & \textbf{362409} \\
& \ourmethod{}     & 3172 & 6957 & 191561 & 368648 \\
\midrule
\multirow{4}{*}{ER}
& \baselineOne{}   & \textbf{1382} & 1842 & 9045 & 29020 \\
& \cowl{}          & \underline{1393} & \textbf{1399} & \textbf{1454} & \underline{1530} \\
& \lowl{}          & 1409 & \underline{1414} & \underline{1458} & \textbf{1521} \\
& \ourmethod{}     & 1813 & 1822 & 1898 & 2002 \\
\bottomrule
\end{tabular}
\caption{Mean prompt length (characters) for compressed node-classification prompts across graph types and sizes. The TLG-A baseline serializes the full retained graph context, while WL-based variants use 1-hop node filtering around the query node.}
\vspace{-3mm}
\label{tab:prompt-length-scalability}
\end{table}

\begin{comment}
\begin{table}[t]
\centering
\small
\setlength{\tabcolsep}{4pt}
\begin{tabular}{llcccc}
\toprule
Graph & Method & $50$ & $100$ & $500$ & $1000$ \\
\midrule
\multirow{4}{*}{BA}
& \baselineOne{} & 4.4K & 15.1K & 390.9K & 1.71M \\
& \cowl{}        & \underline{2.6K} & \underline{6.1K} & \underline{188.5K} & \underline{364.1K} \\
& \lowl{}        & \textbf{2.5K} & \textbf{5.9K} & \textbf{187.4K} & \textbf{362.4K} \\
& \ourmethod{}   & 3.2K & 7.0K & 191.6K & 368.6K \\
\midrule
\multirow{4}{*}{ER}
& \baselineOne{} & \textbf{1.4K} & 1.8K & 9.0K & 29.0K \\
& \cowl{}        & \underline{1.4K} & \textbf{1.4K} & \textbf{1.5K} & \underline{1.5K} \\
& \lowl{}        & 1.4K & \underline{1.4K} & \underline{1.5K} & \textbf{1.5K} \\
& \ourmethod{}   & 1.8K & 1.8K & 1.9K & 2.0K \\
\bottomrule
\end{tabular}
\caption{Mean prompt length for compressed node-classification prompts. Values are shown in thousands (K) or millions (M) of characters. Lower is better.}
\label{tab:prompt-length-scalability}
\end{table}
\end{comment}
%\az{reframe conclusion} Taken together, these compression and scalability results support the same underlying conclusion. Ordered WL labels and their color-based translation are informative because they compress graph-level structure into node-level descriptors that remain useful when explicit context is reduced and when graph size increases. Their benefit is therefore not only improved accuracy on a fixed benchmark, but a more information-efficient graph-to-text representation for LLMs.

\subsection{Importance of Colors}
%\az{or could be: Why Human-Aligned Colors Help}
\label{exp:importance-of-colors}

Our method translates ordered WL labels into natural-language color words rather than into arbitrary identifiers. To test whether the observed gains come from the structural information or from expressing that information in a semantically grounded vocabulary due to compatibility with LLM's linguistic priors.

First, we compare color words against alternative vocabularies that preserve the same WL equivalence classes but remove their natural-language grounding. We evaluate node classification on Cora using over-sampled subgraphs of sizes 10, 20, 30, 40, and 50, and replace color words with either synthetic hue identifiers (e.g., ``Hue $i$'') or human names. Table~\ref{tab:wl-encoding-comparison} shows that natural-language colors substantially outperform both alternatives. 

Since these variants preserve the same underlying WL partition, the large performance gap cannot be explained solely by structural information, meaning the LLM better exploits WL-derived structure when it is expressed using semantically meaningful color terms rather than opaque tokens.

\begin{table}[t]
\centering
\vspace{-2mm}
\small
\setlength{\tabcolsep}{5pt}
\begin{tabular}{lccc}
\toprule
Variant & Accuracy (\%) & MAE $\downarrow$ & RMSE $\downarrow$ \\
\midrule
\baselineOne{} & 49.0 & 0.510 & 0.714 \\
\cowl{} & \textbf{68.9} & \textbf{0.311} & \textbf{0.558} \\
\lowl{} & \underline{63.6} & 0.364 & 0.604 \\
\ourmethod{} & 66.4 & \underline{0.336} & \underline{0.580} \\
WL Colors (hue) & 15.2 & 0.848 & 0.921 \\
WL Colors (names) & 6.3 & 0.937 & 0.968 \\
\bottomrule
\end{tabular}
\caption{Effect of the textual realization on node classification. Hue identifiers and names preserve the same WL class assignment as \cowl{} but replace semantically grounded color words with opaque tokens.}
\vspace{-3mm}
\label{tab:wl-encoding-comparison}
\end{table}

We next study how many distinct color descriptors are needed, since large graphs may yield more WL labels that cannot be uniquely represented with a reasonably small palette. We vary the number of available color descriptors in \cowl{} on 50-node Cora subgraphs. We use three palettes with only base colors (C3, C6, C9-0), and two larger palettes formed by combining nine base colors with lightness modifiers, yielding 27 colors (C9-3) and 45 colors (C9-5). In this dataset, WL refinement produces about $32$ distinct labels per graph, so limited palettes induce collisions, where multiple WL labels are mapped to the same color descriptor.

Table~\ref{tab:color-cardinality-ablation} shows two main patterns. First, all color-based variants outperform the baseline, even when the collision factor is high, showing the usefulness of approximate grouping of structurally similar nodes. Second, performance improves with more color descriptors as collisions decrease. The strongest results are obtained for collision factors between roughly 1 and 5, while performance drops again when the palette becomes overly fine-grained. This suggests that 
% the benefit of color descriptors comes from a balance between discrimination and semantic compactness: 
too few colors merge distinct structural roles, while too many descriptors weaken the regularizing effect of shared human-interpretable categories.

\begin{table}[t]

\centering
\small
\setlength{\tabcolsep}{4pt}
\begin{tabular}{lcccc}
\toprule
Variant & Acc.\ (\%) & MAE $\downarrow$ & \# Colors & CF \\
\midrule
\baselineOne{} & 60.0 & 0.400 &  -- & -- \\
C3 & 65.0 & 0.350 & 3 & 10.6 \\
C6 & \textbf{75.0} & \textbf{0.250} & 6 & 5.3 \\
C9-0 & \textbf{75.0} & \textbf{0.250} & 9 & 3.5 \\
C9-3 & \textbf{75.0} & \textbf{0.250} & 27 & 1.2 \\
C9-5 & \underline{70.0} & \underline{0.300} & 45 & 0.7 \\
\bottomrule
\end{tabular}
\caption{Color-cardinality ablation. C3, C6, and C9-0 use 3, 6, and 9 base color words, respectively. C9-3 and C9-5 augment the 9-color palette with 3 and 5 lightness levels (i.e., 27 and 45 distinct color descriptors). Collision denotes the ratio between the number of WL labels and available color descriptors. CF is collision factor.}
\vspace{-4mm}
\label{tab:color-cardinality-ablation}
\end{table}

\subsection{Performance on Prediction Tasks}
\label{exp:predictive}
Besides the algorithmic tasks, we also show the effectiveness of \ourmethod for node classification. Table \ref{tab:f1-node_classification} presents the results. We first sample 30 graphs of five different sizes (10-50 nodes). Then, we sample 150 subgraphs from all the graphs. Because of this, the dataset contains 600 graph-task instances in total. Overall, incorporating WL-based structural information improves performance over the baseline, indicating that explicit structural cues can benefit predictive tasks beyond purely algorithmic reasoning. The color-only variant (\cowl) achieves the strongest results on Cora, Citeseer, and PubMed. In contrast, the color-label variant (\ourmethod) performs best on the larger and structurally more complex OGBN-ArXiv dataset, where human-tangible similarity cues appear to better support long-range structural dependencies. These results highlight that the relative benefit of different encodings depends on dataset scale and complexity, with color-based representations becoming more advantageous as graphs grow larger.

\begin{table}[t]
\centering
\vspace{-2mm}
\small
\renewcommand{\arraystretch}{1.15}
\setlength{\tabcolsep}{4pt}
\begin{tabular}{lcccc}
\toprule
Method & Cora & Citeseer & PubMed & OGBN-ArXiv \\
\midrule
\baselineOne{} & 11.35 & 9.01 & 5.70 & 22.76 \\
\lowl& 18.78 & \underline{13.83} & \underline{13.23} & 22.61 \\
\cowl & \textbf{22.30} & \textbf{20.08} & \textbf{14.33} & \underline{26.87} \\
\ourmethod& \underline{20.93} & 13.63 & 11.94 & \textbf{40.94} \\
\bottomrule
\end{tabular}
\caption{Node-classification performance measured by F1-Macro (\%). The color-only variant performs best on Cora, Citeseer and PubMed, with the Colors and Full method trailing behind. On ArXiv the full method performs best, followed by the labels.
%Best results per dataset are in bold; second-best are underlined.
}
\vspace{-2mm}
\label{tab:f1-node_classification}
\end{table}

\textit{Additional experiments including the effects on long range dependencies, and scalability are available in Appendix \ref{app:additional_results}}.

%Additional experiments including the effects on long range dependencies, and scalability, comparison with additional baselines, local motifs, dataset- labels- and colors- metrics and the full results for the core reasoning tasks are available in Appendix \ref{app:additional_results}

%\clearpage
\section{Conclusion}
We have studied how LLMs can be better adapted to graph-structured reasoning through human-tangible auxiliary structural information. By introducing \ourmethod, an ordered WL–based graph-to-text representation and mapping structural equivalence classes to natural-language color tokens, we provide a principled way to expose graph structure while preserving permutation invariance and linguistic alignment. We have provided a theoretical analysis connecting ordered WL refinement to distance-weighted connectivity, and a systematic empirical evaluation across diverse graph tasks and graph families. Our results show that human-tangible structural cues can substantially improve LLM performance on graph reasoning, particularly for tasks requiring global structural understanding.

%\paragraph{Limitations:} although - while - limitations of this work are

%\textbf{Augmentation Strategy.} Our augmentation strategy therefore uses WL refinement as the underlying structural descriptor and maps its outputs into human-understandable colors. This produces a compact textual representation of node-level structural roles that can be inserted directly into the prompt. This approach is both task-agnostic and graph-agnostic: it applies equally to synthetic graphs, real-world graphs, and tasks ranging from simple structural queries to complex reasoning or combinatorial problems.  
\clearpage
%\clearpage
\section{Ethical Considerations}
%In this work, we have built methods for generating explanations for graph neural network predictions. We do not foresee any ethical issues from our study. 
This work focuses on improving the representation of graph-structured data for large language models (LLMs) through human-interpretable auxiliary structural information. Our method operates solely on abstract graph topology and synthetic or standard benchmark datasets, and does not involve human subjects, personal data, or sensitive attributes. We do not introduce any deployment mechanisms that could amplify social bias, privacy risks, or misuse beyond those already associated with existing LLMs. As such, we do not foresee any ethical issues from our study.

\textbf{Potential Risks.} While our method operates on abstract graph structures and does not involve sensitive data, it introduces a representation bias by encoding structural information through human-interpretable tokens (e.g., colors). Such encodings may implicitly prioritize certain structural patterns over others, potentially leading to misinterpretation or over-reliance by LLMs in downstream tasks. Additionally, as our approach is prompt-based and leverages pretrained LLMs, it inherits known limitations of these models, including sensitivity to prompt design, lack of robustness to distribution shifts, and potential hallucinations in reasoning.

\section{Limitations}

We present some limitations of our work:

\begin{itemize}
\item \textit{Generalization to NP-hard Problems.} Although we evaluate a diverse set of synthetic and real-world graphs, the task set does not exhaustively cover all graph reasoning problems, especially the NP-hard ones. 
\item \textit{Human-interpretable Encodings are Heuristic.}
 While color-based encoding aligns well with human and linguistic priors, it is ultimately a heuristic design choice. Alternative interpretable representations may yield different or superior performance.
\end{itemize}
%\item Our method projects GNN embeddings into the LLM embedding space and interleaves them with text to form hybrid prompts. However, batching multiple nodes leads to degraded performance and hallucinations, likely due to interference between soft prompts.
    %\item Hallucinations when projecting GNN embeddings into the LLM space. Our method projects GNN embeddings into the LLM embedding space and interleaves them with text to form hybrid prompts. However, batching multiple nodes leads to degraded performance and hallucinations, likely due to interference between soft prompts.
  %  \item While \methodname achieves high empirical fidelity, it does not provide formal guarantees of faithfulness to the GNN’s internal decision logic, as the explanation relies on the LLM’s interpretation of projected embeddings rather than directly tracing GNN computation paths.
 %   \item Although our method incorporated a projector to align the embeddings of the GNN with the LLM space of token embeddings, the objectives we introduce for training the projector are not directly tailored for the task of explanation. More optimal objectives that adhere to the explanation task could bring substantial benefits.
% \end{itemize}
%\section*{Discussion}

%this is a doubt
\clearpage
\bibliography{custom}

\appendix
\clearpage

\section{Additional Details on Related Work}
\label{app:rel-work}

%A central challenge in applying large language models (LLMs) to graph problems is that graphs are non-sequential objects, whereas LLMs operate exclusively on text. Consequently, prior work primarily differs in how graph structure is rendered into textual form. Below, we reorganize existing graph--LLM methods according to the graph-to-text interface they employ, building on the broader literature surveyed in Section~\ref{sec:rw_graph_problems}.

\paragraph{Locality-driven interfaces.}
The most common strategy restricts the textual description of a graph to local neighborhoods, such as $k$-hop ego-nets, short paths, or small induced subgraphs. This approach reduces prompt length and stabilizes generation, but obscures long-range dependencies required for global reasoning. Many early and recent solver-oriented works adopt locality-driven serialization, including NLGraph-style benchmarks and graph-in-text prompting methods \cite{wangCanLanguageModels2024,guoGPT4GraphCanLarge2023,sunLargeLanguageModels2025,zhaoGraphTextGraphReasoning2023b}. Similar locality biases appear in text-attributed graph pipelines and hybrid graph--LLM systems that operate on bounded-depth neighborhoods \cite{qinDisentangledRepresentationLearning2024,yeLanguageAllGraph2024,chenLLaGALargeLanguage2024c}. Empirical analyses show that such locality constraints correlate with failures on tasks requiring global connectivity or cycle reasoning \cite{wangMicrostructuresAccuracyGraph2024c,fuBringComplexGeometric2025}.

\paragraph{Similarity-driven interfaces.}
A smaller body of work selects which nodes or substructures to expose to the LLM based on similarity under a learned encoder, often using cosine similarity in embedding space. The goal is to group semantically related nodes rather than strictly adjacent ones. However, similarity in text or embedding space does not necessarily correspond to topological similarity, and these methods introduce additional design complexity. Representative examples include similarity-based neighbor selection and retrieval strategies for graph LLMs \cite{liSimilaritybasedNeighborSelection2024,xuHowMakeLLMs2025,sunGraphICLUnlockingGraph2025,liAreLargeLanguage2025}.

\paragraph{Embedding-based interfaces.}
Another line of work injects continuous graph representations into LLMs by first encoding structure with a GNN and then aligning the resulting embeddings with the LLM input space via projectors or soft prompts. These methods can convey rich structural information but reduce transparency and often require training or tuning. Examples include LLaGA and related graph--LLM architectures that integrate GNN embeddings into language models \cite{chenLLaGALargeLanguage2024c,liuCanWeSoft2024c,zhuParameterEfficientTuningLarge2024,heUniGraphLearningUnified2025,tangGraphGPTGraphInstruction2024}.

\paragraph{Textual mapping to discrete symbols.}
A complementary strategy converts structural or embedding-derived information into discrete textual tokens that can be directly consumed by LLMs. Talk Like a Graph systematically studies how different graph encodings affect LLM reasoning \cite{fatemiTalkGraphEncoding2023}, while other works map graph structure or learned signals into labels or symbolic descriptions \cite{zhaoGraphTextGraphReasoning2023b,tangGraphGPTGraphInstruction2024}. AuGLM is among the few approaches that explicitly translate embedding-derived information into textual labels for LLM consumption \cite{xuHowMakeLLMs2025}. These methods are closest in spirit to our work.

 Our approach belongs to the textual mapping category. Unlike prior methods that rely on arbitrary numeric labels or post hoc embedding translation, we derive a task-agnostic structural signal using Weisfeiler-Leman refinement and encode it using human-interpretable, similarity-preserving textual attributes. This design maintains a fully text-based interface while exposing non-local structural information aligned with the inductive biases of LLMs.

\section{Additional Details on Our Method}
\subsection{Proof of Theorem \ref{thm:wl-centrality}}
\label{app:proof}
Here we provide the proof for  Theorem \ref{thm:wl-centrality} described in Section \ref{sec:method_structural_augmentation}.
\begin{proof}[Proof sketch]
\textbf{(1)} At $t=0$ all nodes have $\ell_v^{(0)}=1$, hence
$m_v^{(0)}=(1,(1,1,\dots,1))$ where the tuple length equals $\deg(v)$.
Lexicographic order compares tuples first by prefix and then by length; since all entries are $1$,
larger degree yields a lexicographically larger message, so $\mathrm{ID}^{(0)}$ assigns a larger integer:
$\deg(v)>\deg(w)\Rightarrow \ell_v^{(1)}>\ell_w^{(1)}$.

\smallskip
\textbf{(2)} Under the tree-unfolding assumption up to depth $T$, the $T$-hop rooted neighborhood of a node
can be represented by a rooted tree in which the number of nodes at depth $k$ equals $|S_k(\cdot)|$.
Ordered 1-WL on such a tree corresponds to bottom-up aggregation of sorted multisets of child labels.
If $|S_k(v)|\ge |S_k(w)|$ for all $k\le T$ with strict inequality at some $k^\star$, then at the earliest
depth where the rooted trees differ, the multiset (hence the sorted tuple) of child labels for $v$
strictly dominates that of $w$ in lexicographic order. Because $\mathrm{ID}^{(t)}$ is order-preserving
over messages, this dominance propagates through subsequent iterations, yielding
$\ell_v^{(T)}>\ell_w^{(T)}$.

\smallskip
\textbf{(3)} If $\alpha_k$ are nonincreasing and positive, shell dominance with a strict inequality implies
$C_T(v)>C_T(w)$ immediately by summation. Combining with (2) yields consistency between the induced WL order
and $C_T(\cdot)$ for this class of local neighborhoods.
\end{proof}

\subsection{The Pseudocodes}
The pseudocode of ordered 1-WL refinement is shown in Algorithm \ref{alg:wl_simple}. The same for color mapping is shown in Algorithm \ref{alg:color_simple}.

\begin{algorithm}[t]
\caption{Ordered 1-WL Refinement (Canonical Relabeling)}
\label{alg:wl_simple}
\begin{algorithmic}[1]
\Require Graph $G=(V,E)$
\Ensure Final node labels $\ell:V\to\mathbb{N}$ (and optionally the full history)

\State Initialize $\ell_v \gets 1$ for all $v\in V$
\Repeat
    \ForAll{$v\in V$}
        \State $M_v \gets \mathrm{sort}\big(\{\!\{\ell_u : u\in \mathcal{N}(v)\}\!\}\big)$
        \State $m_v \gets (\ell_v,\, M_v)$ \Comment{node message}
    \EndFor
    \State $U \gets \mathrm{sort}\big(\{ m_v : v\in V\}\big)$ \Comment{unique messages, lexicographically sorted}
    \State Define $\mathrm{ID}(m)$ as the index of $m$ in $U$ (0-based or 1-based)
    \ForAll{$v\in V$}
        \State $\ell'_v \gets \mathrm{ID}(m_v)$
    \EndFor
    \State $\ell \gets \ell'$
\Until{$\ell$ does not change}
\State \Return $\ell$
\end{algorithmic}
\end{algorithm}

\begin{algorithm}[t]
\caption{Map WL Labels to Color Names}
\label{alg:color_simple}
\begin{algorithmic}[1]
\Require Labels $\ell_v$ for all $v\in V$
\Ensure Color annotation $\mathrm{Color}(v)$ for all $v\in V$

\State $m_{\min} \gets \min_{v\in V}\ell_v$, \quad $m_{\max} \gets \max_{v\in V}\ell_v$
\State $r \gets \max(1,\, m_{\max}-m_{\min})$
\ForAll{$v\in V$}
    \State $x_v \gets (\ell_v - m_{\min})/r$ \Comment{$x_v\in[0,1]$}
    \State $h_v \gets h_{\min} + (h_{\max}-h_{\min})\cdot x_v$ \Comment{e.g., $[h_{\min},h_{\max}]=[0^\circ,180^\circ]$}
    \State $\mathrm{Color}(v) \gets \mathrm{HueToName}(h_v)$
\EndFor
\State \Return $\mathrm{Color}$
\end{algorithmic}
\end{algorithm}

\subsection{Prompt Structure}
\label{app:prompt-struct}

\begin{FullWidthPromptFigure}{Prompt template for max-flow using the \ourmethod{} method}{fig:prompt-maxflow}

\PromptSection{SysBlue}{SYSTEM}{%
You are an expert in graph theory and graph machine learning. You have deep knowledge of the Weisfeiler-Leman (WL) algorithm, structural graph analysis, and graph visualization techniques.
Graph structure is provided as adjacency lists: each line is `node: neighbor1, neighbor2, ...` and isolated nodes use `(none)`.
Always follow the example format and present the final answer after the `<<ANSWER>>` marker.
Reason carefully about the graph before responding.
}

\PromptSection{UserGreen}{USER (task and format)}{%
Your task is to return the maximum flow value between the <<TARGET\_PAIR>> with unit capacities. The following example demonstrates the format.
A graph is represented as adjacency lists where each line is `node: neighbor1, neighbor2, ...` (use `(none)` for isolated nodes).
`<<WL\_LABELS>>` shows Weisfeiler--Leman structural labels for each node. WL labels capture the structural role of each node based on its neighborhood. WL labels are given the template: WL(i)=k, meaning that node i has WL label k.
Nodes with closer labels are more similar; for instance if WL(node\_1)=1, WL(node\_2)=4, WL(node\_3)=9, then node\_1 is more similar to node\_2 than to node\_3.
`<<COLORS>>` shows colors assigned based on WL label similarity. Colors capture the structural role of each node based on its neighborhood. Colors are given the template: Color(i)=c, meaning that node i has color c.
Colors can be used as a similarity metric among nodes (nodes with similar colors are more similar).
`<<TARGET\_NODE>>` marks the queried node; `<<TARGET\_PAIR>> a,b` marks the queried pair for edge/pairwise tasks.

Always use the provided markers consistently and place the final answer after the `<<ANSWER>>` marker. Think step-by-step about the graph before responding.
}

\PromptSection{GraphPurple}{<<GRAPH>>}{%
<<GRAPH>>
0: 1, 2, 3, 4, 5, 6, 8.
1: 0, 5, 6, 7, 8, 9.
2: 0, 5.
3: 0, 5, 7, 8, 9.
4: 0, 6, 7.
5: 0, 1, 2, 3, 6, 7, 8.
6: 0, 1, 4, 5, 9.
7: 1, 3, 4, 5.
8: 0, 1, 3, 5, 9.
9: 1, 3, 6, 8.
}

\PromptSection{WlOrange}{<<WL\_LABELS>>}{%
WL(0):8, WL(1):7, WL(2):0, WL(3):5, WL(4):1,
WL(5):9, WL(6):4, WL(7):2, WL(8):6, WL(9):3
}

\PromptSection{ColorTeal}{<<COLORS>>}{%
Color(0):teal, Color(1):lime, Color(2):red, Color(3):green, Color(4):crimson,
Color(5):teal, Color(6):lime, Color(7):orange, Color(8):green, Color(9):orange
}

\PromptSection{TargetRed}{<<TARGET\_PAIR>>}{%
<<TARGET\_PAIR>> 2,9

Use the provided markers consistently and place the final answer after the <<ANSWER>> marker. Think step-by-step about the graph before responding.
}

\end{FullWidthPromptFigure}

\begin{FullWidthPromptFigure}{Prompt template for cycle check using \lowl{}}{fig:prompt-cycle-wl-labels}

\PromptSection{SysBlue}{SYSTEM}{%
You are an expert in graph theory and graph machine learning. You have deep knowledge of the Weisfeiler-Leman (WL) algorithm and structural graph analysis.

Graph structure is provided as adjacency lists: each line is `node: neighbor1, neighbor2, ...` and isolated nodes use `(none)`.

Always follow the example format and present the final answer after the `<<ANSWER>>` marker.

Reason carefully about the graph before responding.
}

\PromptSection{UserGreen}{USER (task and format)}{%
Your task is to answer Yes/No whether the <<TARGET\_NODE>> is part of any cycle. The following example demonstrates the format.
A graph is represented as adjacency lists where each line is `node: neighbor1, neighbor2, ...` (use `(none)` for isolated nodes).
`<<WL\_LABELS>>` shows Weisfeiler--Leman structural labels for each node. WL labels capture the structural role of each node based on its neighborhood. WL labels are given the template: WL(i)=k, meaning that node i has WL label k.
Nodes with closer labels are more similar; for instance if WL(node\_1)=1, WL(node\_2)=4, WL(node\_3)=9, then node\_1 is more similar to node\_2 than to node\_3.
`<<TARGET\_NODE>>` marks the queried node; `<<TARGET\_PAIR>> a,b` marks the queried pair for edge/pairwise tasks.

Always use the provided markers consistently and place the final answer after the `<<ANSWER>>` marker. Think step-by-step about the graph before responding.
}

\PromptSection{GraphPurple}{<<GRAPH>>}{%
<<GRAPH>>
0: 1, 2, 3, 4, 5, 6, 7, 8, 9
1: 0, 7, 8, 9
2: 0, 7
3: 0, 8, 9
4: 0, 7
5: 0, 7, 8, 9
6: 0, 7, 8
7: 0, 1, 2, 4, 5, 6, 8, 9
8: 0, 1, 3, 5, 6, 7, 9
9: 0, 1, 3, 5, 7, 8
}

\PromptSection{WlOrange}{<<WL\_LABELS>>}{%
<<WL\_LABELS>>
WL(0):7, WL(1):3, WL(2):0, WL(3):1, WL(4):0,
WL(5):3, WL(6):2, WL(7):6, WL(8):5, WL(9):4
}

\PromptSection{TargetRed}{<<TARGET\_NODE>>}{%
<<TARGET\_NODE>> 7

Use the provided markers consistently and place the final answer after the `<<ANSWER>>` marker. Think step-by-step about the graph before responding.
}

\end{FullWidthPromptFigure}

\begin{FullWidthPromptFigure}{Prompt template for shortest path using \cowl{}}{fig:prompt-shortestpath-wl-colors}

\PromptSection{SysBlue}{SYSTEM}{%
You are an expert in graph theory and graph machine learning. You have deep knowledge of the Weisfeiler-Leman (WL) algorithm, structural graph analysis, and graph visualization techniques.

Graph structure is provided as adjacency lists: each line is `node: neighbor1, neighbor2, ...` and isolated nodes use `(none)`.

Always follow the example format and present the final answer after the `<<ANSWER>>` marker.

Reason carefully about the graph before responding.
}

\PromptSection{UserGreen}{USER (task and format)}{%
Your task is to return the shortest path length between the <<TARGET\_PAIR>> (inf if no path). The following example demonstrates the format.
A graph is represented as adjacency lists where each line is `node: neighbor1, neighbor2, ...` (use `(none)` for isolated nodes).
`<<COLORS>>` shows colors assigned based on WL label similarity. Colors capture the structural role of each node based on its neighborhood. Colors are given the template: Color(i)=c, meaning that node i has color c.
Colors can be used as a similarity metric among nodes (nodes with similar colors are more similar).
`<<TARGET\_NODE>>` marks the queried node; `<<TARGET\_PAIR>> a,b` marks the queried pair for edge/pairwise tasks.

Always use the provided markers consistently and place the final answer after the `<<ANSWER>>` marker. Think step-by-step about the graph before responding.
}

\PromptSection{GraphPurple}{<<GRAPH>>}{%
<<GRAPH>>
0: 1, 2, 3, 4, 5, 6, 8
1: 0, 5, 6, 7, 8, 9
2: 0, 5
3: 0, 5, 7, 8, 9
4: 0, 6, 7
5: 0, 1, 2, 3, 6, 7, 8
6: 0, 1, 4, 5, 9
7: 1, 3, 4, 5
8: 0, 1, 3, 5, 9
9: 1, 3, 6, 8
}

\PromptSection{ColorTeal}{<<COLORS>>}{%
<<COLORS>>
Color(0):teal, Color(1):lime, Color(2):red, Color(3):green, Color(4):crimson,
Color(5):teal, Color(6):lime, Color(7):orange, Color(8):green, Color(9):orange
}

\PromptSection{TargetRed}{<<TARGET\_PAIR>>}{%
<<TARGET\_PAIR>> 2,9

Use the provided markers consistently and place the final answer after the `<<ANSWER>>` marker. Think step-by-step about the graph before responding.
}

\end{FullWidthPromptFigure}

The prompt follows a clearly defined structure. First of all it is split in two parts: system and user prompt.
To better visualize the prompt structure we can refer to figures \ref{fig:prompt-maxflow}, \ref{fig:prompt-cycle-wl-labels}, \ref{fig:prompt-shortestpath-wl-colors}. More specifically, Figure \ref{fig:prompt-maxflow} solves max-flow with the \ourmethod{} method, Figure \ref{fig:prompt-cycle-wl-labels} solves cycle check with the \lowl{} method and Figure \ref{fig:prompt-shortestpath-wl-colors} solves shortest path with the \cowl{} method.
The system prompt contains information that is useful to explain to the LLM its role in the conversation. It contains information such as "you are an expert in ...", explanation of the syntax that will be used and general guidelines (e.g. "think carefully before responding").
The user prompt contains the task definition, the graph and the explanation for both labels and colors.
Blocks are marked explicitly using tokens such as \texttt{<<GRAPH>>}, \texttt{<<WL\_LABELS>>}, or \texttt{<<COLORS>>} to simplify parsing. Answer template: forced answer pattern such as \texttt{<<ANSWER>> Yes/No} or \texttt{<<ANSWER>> [number]} to try to make evaluation deterministic. We additionally include a short instruction such as “Think step-by-step and reason to solve the task.” This prompt pattern is widely used in LLM literature and helps stabilize reasoning without introducing model-specific biases.

\section{Reproducibility}
\textbf{Code.} The code is available at: \url{https://github.com/angelozangari/CL-OWL}

The entire pipeline is designed to be easily executable and reproducible. Easiness of \textbf{executability} is guaranteed by one main \textit{./run.sh} script that automates all the pipeline stages. This includes the initial setup, which ensures that all the required packages are installed, and sets up the python virtual environment with the required packages. Then one can proceed with the dataset generation, followed by the LLM testing, the automatic parsing and the computation of the results. Each of these steps is executable singularly with a dedicated script command.
Each configuration was evaluated with a single run using a different random seed. We do not report standard deviation as it is not highly relevant in our evaluation context. We generate different random graphs and the answers of the graph queries are averaged on these graphs.

Regarding \textbf{reproducibility}, all the parameters in the pipeline are in a dedicated config/ folder, which contains exclusively \textit{.yaml} files, in which all the needed parameters for dataset generation, LLM testing and reporting reside. Moreover, all the parts of the prompts (from the pre-task definition to the task instruction) have their content entirely confined to the \textit{config/} directory too. This way, if anyone wants to change the content of the prompt they can simply edit the configuration file of the prompt and avoid modifying the code.

\section{Detailed Experimental Setup}
\label{app:setup_details}

%BA graphs naturally produce imbalanced binary tasks due to high connectivity

% Graph-level Statistics
\begin{table}[t]
\centering
\resizebox{\columnwidth}{!}{\begin{tabular}{llcccc}
\toprule
Graph Type & Size & Avg Edges & Avg Degree & Avg Diameter & Density \\
\midrule
BA & 10 & $20.4 \pm 5.6$ & $4.08 \pm 1.12$ & $2.1 \pm 0.3$ & 0.454 \\
BA & 15 & $43.7 \pm 12.2$ & $5.83 \pm 1.63$ & $2.4 \pm 0.5$ & 0.417 \\
BA & 20 & $76.4 \pm 17.1$ & $7.64 \pm 1.71$ & $2.4 \pm 0.5$ & 0.402 \\
BA & 25 & $114.2 \pm 31.6$ & $9.13 \pm 2.53$ & $2.4 \pm 0.5$ & 0.381 \\
BA & 30 & $171.3 \pm 40.6$ & $11.42 \pm 2.71$ & $2.3 \pm 0.5$ & 0.394 \\
ER & 10 & $9.7 \pm 2.6$ & $1.95 \pm 0.51$ & $4.8 \pm 0.9$ & 0.216 \\
ER & 15 & $20.1 \pm 3.6$ & $2.68 \pm 0.48$ & $5.3 \pm 1.5$ & 0.192 \\
ER & 20 & $38.2 \pm 5.8$ & $3.82 \pm 0.58$ & $4.5 \pm 0.8$ & 0.201 \\
ER & 25 & $59.9 \pm 7.2$ & $4.79 \pm 0.58$ & $4.0 \pm 0.3$ & 0.200 \\
ER & 30 & $87.0 \pm 8.8$ & $5.80 \pm 0.59$ & $4.0 \pm 0.7$ & 0.200 \\
\bottomrule
\end{tabular}}
\caption{Synthetic graph properties by type and size for the dataset used in algorithmic tasks \ref{exp:quality}. BA denotes Barabasi Albert and ER denotes Erdos Renyi.}
\label{tab:graph-stats-reasoning}
\end{table}

% long range deps
% Graph-level Statistics
\begin{table}[t]
\centering
\resizebox{\columnwidth}{!}{\begin{tabular}{llcccc}
\toprule
Graph Type & Size & Avg Edges & Avg Degree & Avg Diameter & Density \\
\midrule
Erdos Renyi & 75 & $41.0 \pm 0.0$ & $1.09 \pm 0.00$ & N/A & 0.015 \\
Erdos Renyi & 100 & $72.4 \pm 8.4$ & $1.45 \pm 0.17$ & N/A & 0.015 \\
Path & 50 & $49.0 \pm 0.0$ & $1.96 \pm 0.00$ & $49.0 \pm 0.0$ & 0.040 \\
Path & 75 & $74.0 \pm 0.0$ & $1.97 \pm 0.00$ & N/A & 0.027 \\
Path & 100 & $99.0 \pm 0.0$ & $1.98 \pm 0.00$ & N/A & 0.020 \\
\bottomrule
\end{tabular}}
\caption{Synthetic graph properties by type and size for the dataset used in the experiments about long range dependencies \ref{exp:long-range-deps} and scalability \ref{exp:compression-scalability} in the main paper.}
\label{tab:graph-stats-long-range}
\end{table}

\begin{table*}[t]
\centering
\small
\setlength{\tabcolsep}{5pt}
%\resizebox{\columnwidth}{!}{
\begin{tabular}{llcccc}
\toprule
Dataset & Size & Avg Edges & Avg Degree & Avg Diameter & Density \\
\midrule
\multirow{5}{*}{Cora}
& 10 & $12.4 \pm 2.4$ & $2.49 \pm 0.48$ & $3.0 \pm 0.7$ & 0.276 \\
& 20 & $29.7 \pm 6.7$ & $2.97 \pm 0.67$ & $3.9 \pm 0.9$ & 0.156 \\
& 30 & $46.3 \pm 9.7$ & $3.09 \pm 0.64$ & $4.5 \pm 1.1$ & 0.106 \\
& 40 & $61.8 \pm 14.1$ & $3.09 \pm 0.71$ & $4.6 \pm 1.1$ & 0.079 \\
& 50 & $78.7 \pm 10.1$ & $3.15 \pm 0.40$ & $5.1 \pm 1.2$ & 0.064 \\
\midrule
\multirow{5}{*}{Citeseer}
& 10 & $12.8 \pm 3.3$ & $2.56 \pm 0.67$ & $3.3 \pm 0.8$ & 0.285 \\
& 20 & $28.1 \pm 6.7$ & $2.81 \pm 0.67$ & $4.5 \pm 1.6$ & 0.148 \\
& 30 & $43.3 \pm 6.7$ & $2.89 \pm 0.45$ & $5.7 \pm 1.8$ & 0.100 \\
& 40 & $57.7 \pm 10.3$ & $2.89 \pm 0.51$ & $6.1 \pm 1.8$ & 0.074 \\
& 50 & $74.7 \pm 12.3$ & $2.99 \pm 0.49$ & $6.2 \pm 1.8$ & 0.061 \\
\midrule
\multirow{5}{*}{PubMed}
& 10 & $11.1 \pm 1.8$ & $2.21 \pm 0.36$ & $2.4 \pm 0.5$ & 0.246 \\
& 20 & $26.2 \pm 6.6$ & $2.62 \pm 0.66$ & $3.3 \pm 0.8$ & 0.138 \\
& 30 & $41.5 \pm 11.0$ & $2.77 \pm 0.74$ & $3.9 \pm 0.7$ & 0.095 \\
& 40 & $59.8 \pm 14.1$ & $2.99 \pm 0.70$ & $4.2 \pm 0.7$ & 0.077 \\
& 50 & $81.5 \pm 24.3$ & $3.26 \pm 0.97$ & $4.2 \pm 0.7$ & 0.067 \\
\midrule
\multirow{5}{*}{OGBN-ArXiv}
& 10 & $12.7 \pm 2.5$ & $2.54 \pm 0.51$ & $2.6 \pm 0.6$ & 0.282 \\
& 20 & $34.6 \pm 13.0$ & $3.46 \pm 1.30$ & $3.4 \pm 1.1$ & 0.182 \\
& 30 & $63.0 \pm 30.3$ & $4.20 \pm 2.02$ & $3.9 \pm 1.2$ & 0.145 \\
& 40 & $86.9 \pm 31.0$ & $4.35 \pm 1.55$ & $4.1 \pm 1.1$ & 0.111 \\
& 50 & $131.6 \pm 44.5$ & $5.26 \pm 1.78$ & $3.8 \pm 0.8$ & 0.107 \\
\bottomrule
\end{tabular}
%}
\caption{Graph statistics for sampled subgraphs used in node-classification experiments (Section~\ref{exp:predictive}). Values are reported as mean $\pm$ standard deviation over sampled subgraphs.}
\label{tab:graph-stats-predictive}
\end{table*}

\textbf{Synthetic Data Generation.} When generating synthetic data we can choose the specific number of nodes we want our graphs to have, the graph type, the task to be solved and how many of this graph-type-task combination we want to generate.
The specific statistics of the datasets used in section \ref{exp:intro}.
% More specifically, Table \ref{tab:graph-stats-reasoning} contains the statistics for the datasets used in experiments \ref{exp:quality}. Instead, Table \ref{tab:graph-stats-long-range} shows statistics for dataset used in \ref{exp:long-range-deps} and \ref{exp:compression-scalability}. 

\textbf{Real-World Datasets.} We consider four widely used real-world citation and academic graphs: Cora, Citeseer, PubMed, and OGBN-ArXiv. For each dataset, we sample induced subgraphs of varying sizes as described in Section~\ref{exp:predictive}, and compute structural statistics over the resulting instances.

Table~\ref{tab:graph-stats-predictive} reports the average number of edges, average degree, diameter, and density as a function of subgraph size. Across all datasets, we observe a consistent increase in the number of edges and average degree with graph size, while graph density decreases, reflecting the sparsity typical of real-world graphs. Diameters grow sublinearly with size, indicating that sampled subgraphs preserve small-world characteristics. OGBN-ArXiv subgraphs tend to exhibit higher average degree and variance compared to the other datasets, consistent with its denser and more heterogeneous global structure.

\section{Additional Results}
\label{app:additional_results}

This section reports additional experiments. They cover external graph-language baselines, lexical robustness of the color vocabulary, color-cardinality ablations, compressed prompting, local motif recognition, and prompt-length compression.

\subsection{Effects on Long Range Dependencies}
\label{exp:long-range-deps}
As we discussed in Section \ref{sec:method}, one of the major motivations of our design is to capture long-range dependencies (e.g., global structure). To demonstrate this, we evaluate performance on the maximum flow task where the source and target nodes are significantly farther apart. To achieve this, we construct 120 graphs with 50, 75, 100 nodes of path and ER graphs with $p=0.015$ (sparse, for high diameter), such that all the graphs' diameters $g_d$ are above a threshold, i.e., $g_d$ > 10. 

Table \ref{tab:longrange-merged} shows the results over different source-to-target distance ranges. Clearly, enriching the prompt with WL-labels not only enhances performance across all ranges, but also the improvement is larger as the distance increases. Indeed, encoding graphs from their non-Euclidean space to the textual sequence where position matters might make two nodes appear farther from each other in the sequence compared to their original shortest (path) distance in the graph. Therefore, attending to the WL-labels allows the LLM to compensate for the noisy signals from the positional-encoding module as its information is affected by the appearance position of the encoded node/edge tokens.

\begin{table*}[t]
\centering
\small
\setlength{\tabcolsep}{4pt}
\resizebox{.6\textwidth}{!}{\begin{tabular}{lcccccccc}
%\toprule
\toprule
\multicolumn{9}{c}{\textbf{Source-to-Target Distance Range}} \\
\midrule
 & \multicolumn{4}{c}{\textbf{Maximum Flow}} & \multicolumn{4}{c}{\textbf{Shortest Path}} \\
\cmidrule(lr){2-5}\cmidrule(lr){6-9}
Method 
& 10--15 & 16--25 & 26--40 & 41+ 
& 10--15 & 16--25 & 26--40 & 41+ \\
\midrule
\baselineOne{} 
& 3.1 & 0.0 & 5.0 & 0.0 
& \underline{6.5} & \textbf{6.2} & \textbf{20.0} & 0.0 \\

\cowl{} 
& 12.5 & \underline{9.1} & \underline{15.0} & \textbf{31.2} 
& 0.0 & 0.0 & \textbf{20.0} & \textbf{11.8} \\

\lowl{} 
& \underline{15.6} & \textbf{27.3} & \textbf{25.0} & 6.2 
& \underline{6.5} & \textbf{6.2} & 0.0 & \underline{5.9} \\

\ourmethod{} 
& \textbf{25.0} & \underline{9.1} & 5.0 & \underline{18.8} 
& \textbf{12.9} & \textbf{6.2} & \underline{6.7} & \underline{5.9} \\
\bottomrule
\end{tabular}}
\caption{Accuracy (\%) by source--target distance range for Maximum Flow and Shortest Path. Best values per column are in bold; second-best are underlined. Our \cowl variant generally outperforms the baselines. \ourmethod achieves superior performance on close range. \lowl performs best on medium to long ranges, and color-based variants achieve superior performance on the longest range.}
\label{tab:longrange-merged}
%\vspace{-4mm}
\end{table*}

\subsection{Scalability}
\label{exp:scalability}

We next evaluate whether the same structural advantage persists when graph size increases in hard reasoning tasks. Although modern LLMs support larger context windows, performance still deteriorates as prompts become longer and graph structure becomes harder to recover from a linearized serialization. We therefore study the same sparse high-diameter setting as in Section~\ref{exp:long-range-deps}, but now stratify results by graph size rather than by source--target distance. Table~\ref{tab:scalability-gpt35-both} reports accuracy for Maximum Flow and Shortest Path on graphs with 50, 75, and 100 nodes. Performance degrades substantially with graph size for all methods, confirming that larger graphs remain challenging even when they fit within context. However, WL-based variants mitigate this degradation. For Maximum Flow, the baseline is near failure across sizes, with aggregate accuracy only 2.53\%, whereas \lowl{} and the color-based variants raise aggregate performance to 16.46--17.72\%. For Shortest Path, the gains are smaller and less uniform, but \ourmethod{} attains the strongest aggregate performance at 8.86\%. These results mirror the long-range analysis in Section~\ref{exp:long-range-deps}: once direct structural recovery from the serialized graph becomes unreliable, WL-derived descriptors provide a more robust signal.

\begin{table*}[ht]
\centering
\small
\setlength{\tabcolsep}{5pt}
\resizebox{.6\textwidth}{!}{\begin{tabular}{lcccccccc}
\toprule
 & \multicolumn{4}{c}{\textbf{Maximum Flow}} & \multicolumn{4}{c}{\textbf{Shortest Path}} \\
\cmidrule(lr){2-5}\cmidrule(lr){6-9}
Method 
& 50 & 75 & 100 & All 
& 50 & 75 & 100 & All \\
\midrule
\baselineOne{} 
& 5.00 & 4.76 & 0.00 & 2.53 
& 5.00 & \textbf{14.29} & \underline{5.26} & \underline{7.59} \\

\cowl{} 
& \underline{45.00} & 4.76 & \underline{7.89} & \underline{16.46} 
& 0.00 & \underline{9.52} & \textbf{7.89} & 6.33 \\

\lowl{} 
& \textbf{50.00} & \textbf{9.52} & 5.26 & \textbf{17.72} 
& \underline{10.00} & 4.76 & 2.63 & 5.06 \\

\ourmethod{} 
& 30.00 & 4.76 & \textbf{15.79} & \underline{16.46} 
& \textbf{20.00} & \textbf{14.29} & 0.00 & \textbf{8.86} \\
\bottomrule
\end{tabular}}
\caption{Scalability of Maximum Flow and Shortest Path for different graph sizes. We report accuracy (\%). The best is in bold, and the second-best is underlined. Accuracy degrades rapidly with graph size for both tasks, with WL-based methods mitigating but not eliminating scaling failures. Note that the highest graph size used in \cite{fatemiTalkGraphEncoding2023} for TLG is 20.}
\label{tab:scalability-gpt35-both}
%\vspace{-2mm}
\end{table*}

To isolate this large-graph effect more directly, Table~\ref{tab:maxflow-50-100-er-path} compares Maximum Flow accuracy between 50-node and 100-node Erd\H{o}s--R\'enyi and path graphs. All WL-based variants remain substantially stronger than the baseline at both sizes. Moreover, the degradation from 50 to 100 nodes is much smaller for the color-based variants: \cowl{} drops by only 6.8 points, compared to 45.0 points for the baseline, 40.9 for \lowl{}, and 33.3 for \ourmethod{}. While \ourmethod{} achieves the highest accuracy at 50 nodes, \cowl{} is strongest at 100 nodes and exhibits the most stable scaling behavior. This suggests that the color-based realization of ordered WL structure is particularly helpful once graphs become large enough that the raw serialized edge list no longer reliably exposes the relevant global organization.

\begin{table}[ht]
\centering
\small
\setlength{\tabcolsep}{5pt}
\begin{tabular}{lccc}
\toprule
Variant & Acc.\ (50) & Acc.\ (100) & $\Delta$ \\
\midrule
\baselineOne{} & 45.0 & 0.0 & -45.0 \\
\lowl{}        & \underline{90.9} & 50.0 & -40.9 \\
\ourmethod{}   & \textbf{100.0} & \underline{66.7} & -33.3 \\
\cowl{}        & 81.8 & \textbf{75.0} & \textbf{-6.8} \\
\bottomrule
\end{tabular}
\caption{Maximum-flow accuracy (\%) on Erd\H{o}s--R\'enyi and path graphs with 50 and 100 nodes. $\Delta$ denotes the change in accuracy from 50 to 100 nodes; smaller absolute degradation is better. Best values per column are in bold; second-best are underlined.}
\label{tab:maxflow-50-100-er-path}
\end{table}

\subsection{Additional Baseline Comparisons}
\label{exp:comparison-with-baselines}

We further compare our method against recent graph-language baselines on node classification over Cora subgraphs. We follow the same evaluation setting as in Section~\ref{exp:predictive}: subgraphs of sizes 10, 20, 30, 40, and 50 nodes, with 30 sampled graphs per size, yielding 150 graph-task instances in total. For the external baselines, we follow the protocols provided by the original authors. GraphText is evaluated in inference mode using its official pipeline; LLaGA is evaluated with the released pretrained checkpoint; and OFA is trained and evaluated using the official codebase with the paper-reported supervised node-classification setting (100 epochs, learning rate $10^{-4}$, batch size 128). All methods are evaluated on the same sampled instances.

Table~\ref{tab:comparison-with-baselines} shows that all variants of our method outperform the external baselines as well as the TLG-A prompt baseline. In particular, \cowl{} achieves the best overall accuracy at 68.9\%, followed by \ourmethod{} at 66.4\% and \lowl{} at 63.6\%. GraphText performs similarly to TLG-A, while LLaGA and OFA are substantially weaker in this setting. These results indicate that the gains observed in Section~\ref{exp:predictive} are not merely due to applying an LLM to citation-graph subproblems, but are specifically associated with exposing ordered WL structure through the proposed textual encoding.

\begin{table}[t]
\centering
\small
\setlength{\tabcolsep}{5pt}
\begin{tabular}{lcc}
\toprule
Method & Type & Acc.\ (\%) \\
\midrule
LLaGA & Graph-LLM & 15.0 \\
GraphText & Graph-text & 49.7 \\
OFA & Graph model & 16.3 \\
\baselineOne{} & Prompt baseline & 49.0 \\
\lowl{} & Ours & 63.6 \\
\cowl{} & Ours & \textbf{68.9} \\
\ourmethod{} & Ours & \underline{66.4} \\

\bottomrule
\end{tabular}
\caption{Comparison with external baselines on Cora node classification. Accuracy is reported over all sampled instances. GraphText uses its official inference pipeline, LLaGA the released checkpoint, and OFA the official codebase with the paper-reported supervised node-classification setting.}
\label{tab:comparison-with-baselines}
\end{table}

\subsection{Local Motif Recognition}
\label{exp:local-motifs}

The graph-level triangle-counting task in Table~\ref{tab:algorithmic-tasks-gpt4o-agg} requires counting all triangles in the graph, which combines local motif detection with a global aggregation burden over the full serialized graph. Since our method enriches node-level representations through WL-derived structural descriptors, a weaker result on graph-level triangle counting does not necessarily imply a weakness in recognizing local triangular structure itself. To isolate this distinction, we introduce a three-node triangle-membership task that asks whether a queried node triple forms a triangle.

Table~\ref{tab:app-triangle-membership} shows that all WL-based variants improve over the TLG-A baseline on this localized formulation, with \ourmethod{} achieving the strongest performance at 68.3\% accuracy compared to 63.7\% for the baseline. The same trend is reflected in MAE and RMSE, where \ourmethod{} also performs best. This indicates that WL-based labels and colors do help the LLM recognize local motif structure when the task is aligned with the node-level information injected by the prompt.

\begin{table}[t]
\centering
\small
\setlength{\tabcolsep}{5pt}
\begin{tabular}{lccc}
\toprule
Variant & Acc.\ (\%) & MAE $\downarrow$ & RMSE $\downarrow$ \\
\midrule
\baselineOne{} & 63.7 & 0.363 & 0.603 \\
\cowl{} & 65.8 & 0.342 & 0.585 \\
\lowl{} & 66.0 & 0.340 & 0.583 \\
\ourmethod{} & \textbf{68.3} & \textbf{0.317} & \textbf{0.563} \\
\bottomrule
\end{tabular}
\caption{Triangle-membership results on a localized three-node query task. Unlike graph-level triangle counting, this formulation isolates local motif recognition by asking whether a queried triple forms a triangle.}
\label{tab:app-triangle-membership}
\end{table}

Taken together with the results of Section~\ref{exp:quality}, this experiment suggests that the weaker performance on graph-level triangle counting is primarily a task-formulation issue. For local motif recognition, where the relevant information is concentrated around a small queried set of nodes, the proposed node-level structural augmentation is beneficial.

\subsection{WL and Color Descriptor Statistics}
\label{exp:descriptor-stats}

To contextualize the design choices behind the WL-based descriptors, we report summary statistics of the refinement process on the main synthetic reasoning datasets. Table~\ref{tab:app-wl-size-stats} shows the number of WL iterations required for stabilization and the number of distinct structural labels and color descriptors produced at each graph size. Since cycle check, maximum flow, reachability, shortest path, and triangle counting are instantiated on the same generated graph pool, these statistics are identical across tasks for a fixed graph size and are therefore reported only once per size.

Two patterns are clear. First, the number of WL iterations needed for stabilization remains small across the main reasoning regime, typically around three to four iterations for graphs with 15--30 nodes. This supports the use of a small fixed WL refinement budget in these experiments. Second, the number of distinct WL labels grows steadily with graph size, which implies that larger graphs require either a larger descriptor vocabulary or a controlled level of collisions when labels are mapped to natural-language colors.

This behavior is also consistent with the color-cardinality ablation in Section~\ref{exp:importance-of-colors}. In the 50-node Cora node-classification setting used there, WL refinement requires $4.00 \pm 0.63$ iterations (range 3--5) and produces $31.80 \pm 8.89$ distinct WL labels per graph (range 18--46). With a small base color vocabulary, multiple WL labels must therefore map to the same color descriptor. Expanding the vocabulary with lightness modifiers reduces these collisions, but the results in Table~\ref{tab:color-cardinality-ablation} show that performance saturates once the effective collision factor is roughly between one and three. Overall, these statistics provide a descriptive account of the structural resolution induced by ordered WL refinement and help explain the empirical tradeoffs observed in the color-vocabulary ablations.

\begin{table*}[t]
\centering
\small
\setlength{\tabcolsep}{4pt}
\resizebox{.8\textwidth}{!}{\begin{tabular}{rcccc}
\toprule
Size & WL iters & WL labels & RGB colors & Color names \\
\midrule
10 & $2.78{\pm}0.79$ (2--4) & $6.67{\pm}3.13$ (2--10) & $6.67{\pm}3.13$ (2--10) & $4.67{\pm}1.63$ (2--6) \\
15 & $3.78{\pm}0.79$ (2--5) & $12.56{\pm}3.89$ (4--15) & $12.56{\pm}3.89$ (4--15) & $5.67{\pm}0.67$ (4--6) \\
20 & $3.78{\pm}0.92$ (3--6) & $18.22{\pm}2.53$ (12--20) & $18.22{\pm}2.53$ (12--20) & $6.00{\pm}0.00$ (6--6) \\
25 & $3.56{\pm}0.68$ (3--5) & $23.89{\pm}1.66$ (21--25) & $23.89{\pm}1.66$ (21--25) & $6.00{\pm}0.00$ (6--6) \\
30 & $3.44{\pm}0.68$ (3--5) & $29.67{\pm}0.67$ (28--30) & $29.67{\pm}0.67$ (28--30) & $6.00{\pm}0.00$ (6--6) \\
\bottomrule
\end{tabular}}
\caption{WL refinement statistics for the main synthetic reasoning datasets, aggregated by graph size. Values are reported as mean $\pm$ standard deviation, with min--max ranges in parentheses.}
\label{tab:app-wl-size-stats}
\end{table*}

\subsection{Full results on Graph Reasoning Tasks}

Tables~\ref{tab:reasoning-by-graphtype-gpt4o} and~\ref{tab:reasoning-by-graphtype-gpt35} report per-task reasoning performance stratified by graph type (BA, ER) and aggregated across graph types. These tables provide a finer-grained view of model behavior beyond the aggregated results discussed in Section~\ref{exp:quality}.

Across tasks, we observe substantial variability between BA and ER graphs, particularly for tasks requiring global reasoning such as Maximum Flow and Shortest Path. Performance on BA graphs is generally higher for cycle-related tasks, while ER graphs often exhibit increased difficulty for flow- and path-based reasoning, reflecting differences in graph structure and connectivity patterns. Aggregated results therefore mask non-trivial graph-type effects that are visible only at this granularity. While WL-enriched prompting improves performance for several tasks under \texttt{gpt-4o}, the same trends are less consistent for \texttt{gpt-3.5-turbo}, suggesting that the effectiveness of structural encodings depends on the underlying model’s reasoning capacity.

%Comparing models, \texttt{gpt-4o} consistently outperforms \texttt{gpt-3.5} across all tasks and metrics, with especially large gaps on more challenging reasoning tasks. While WL-enriched prompting improves performance for several tasks under \texttt{gpt-4o}, the same trends are less consistent for \texttt{gpt-3.5-turbo}, suggesting that the effectiveness of structural augmentations depends on the underlying model’s reasoning capacity.

Overall, these detailed results support the conclusions drawn in the main text while highlighting the sensitivity of graph reasoning performance to both graph type and model generation.

\begin{table*}[t]
\centering
\small
\setlength{\tabcolsep}{2.8pt}
\begin{tabular}{llcccccccccc}
\toprule
Task & Graph & \multicolumn{2}{c}{\baselineOne{}} & \multicolumn{2}{c}{\baselineTwo{}} & \multicolumn{2}{c}{\cowl{}} & \multicolumn{2}{c}{\lowl{}} & \multicolumn{2}{c}{\ourmethod{}} \\
\cmidrule(lr){3-4}\cmidrule(lr){5-6}\cmidrule(lr){7-8}\cmidrule(lr){9-10}\cmidrule(lr){11-12}
 &  & Acc.\ (\%) & MAE & Acc.\ (\%) & MAE & Acc.\ (\%) & MAE & Acc.\ (\%) & MAE & Acc.\ (\%) & MAE \\
\midrule
\multirow{3}{*}{Cycle Check}
& BA  & 100.00 & 0.0000 & 98.00 & 0.0200 & 100.00 & 0.0000 & 99.00 & 0.0100 & 100.00 & 0.0000 \\
& ER  & 79.00 & 0.2100 & 85.00 & 0.1500 & 85.00 & 0.1500 & 85.00 & 0.1500 & 86.00 & 0.1400 \\
& \textit{All}
& 89.50 & 0.1050
& \underline{91.50} & \underline{0.0850}
& \underline{92.50} & \underline{0.0750}
& 92.00 & 0.0800
& \textbf{93.00} & \textbf{0.0700} \\
\midrule
\multirow{3}{*}{Maximum Flow}
& BA  & 26.09 & 0.5233 & 34.78 & 0.4954 & 30.43 & 0.4863 & 17.39 & 0.6408 & 30.43 & 0.3564 \\
& ER  & 35.00 & 0.4870 & 37.00 & 0.4337 & 38.00 & 0.4330 & 41.00 & 0.4370 & 38.00 & 0.4573 \\
& \textit{All}
& 33.33 & 0.4938
& \underline{36.59} & \underline{0.4452}
& \underline{36.59} & \underline{0.4430}
& \underline{36.59} & 0.4751
& \textbf{36.59} & \textbf{0.4385} \\
\midrule
\multirow{3}{*}{Shortest Path}
& BA  & 73.91 & 0.0870 & 78.26 & 0.0725 & 82.61 & 0.0580 & 78.26 & 0.0725 & 73.91 & 0.0870 \\
& ER  & 91.00 & 0.0308 & 85.00 & 0.0447 & 90.00 & 0.0349 & 95.00 & 0.0137 & 90.00 & 0.0317 \\
& \textit{All}
& 87.80 & 0.0413
& 83.74 & 0.0499
& \underline{88.62} & \underline{0.0392}
& \textbf{91.87} & \textbf{0.0247}
& 86.99 & 0.0420 \\
\midrule
\multirow{3}{*}{Triangle Count}
& BA  & 11.00 & 0.4438 & 11.00 & 0.4701 & 12.00 & 0.4495 & 13.00 & 0.4660 & 13.00 & 0.3996 \\
& ER  & 21.00 & 0.4674 & 12.00 & 0.5764 & 16.00 & 0.4579 & 15.00 & 0.5509 & 16.00 & 0.5377 \\
& \textit{All}
& \textbf{16.00} & \textbf{0.4556}
& 11.50 & 0.5233
& 14.00 & \underline{0.4537}
& 14.00 & 0.5085
& \underline{14.50} & 0.4686 \\
\bottomrule
\end{tabular}
\caption{Per-task reasoning performance for \texttt{gpt-4o}, stratified by graph type (BA, ER) and aggregated (\textit{All}). Accuracy is higher-is-better; MAE is lower-is-better. Bold and underline indicate best and second-best methods, respectively, computed only on aggregated (\textit{All}) rows.}
\label{tab:reasoning-by-graphtype-gpt4o}
\end{table*}

\begin{table*}[t]
\centering
\small
\setlength{\tabcolsep}{2.8pt}
\begin{tabular}{llcccccccccc}
\toprule
Task & Graph & \multicolumn{2}{c}{\baselineOne{}} & \multicolumn{2}{c}{\baselineTwo{}} & \multicolumn{2}{c}{\cowl{}} & \multicolumn{2}{c}{\lowl{}} & \multicolumn{2}{c}{\ourmethod{}} \\
\cmidrule(lr){3-4}\cmidrule(lr){5-6}\cmidrule(lr){7-8}\cmidrule(lr){9-10}\cmidrule(lr){11-12}
 &  & Acc.\ (\%) & MAE & Acc.\ (\%) & MAE & Acc.\ (\%) & MAE & Acc.\ (\%) & MAE & Acc.\ (\%) & MAE \\
\midrule
\multirow{3}{*}{Cycle Check}
& BA & 66.00 & 0.3400 & 51.00 & 0.4900 & 66.00 & 0.3400 & 75.00 & 0.2500 & 72.00 & 0.2800 \\
& ER & 55.00 & 0.4500 & 52.00 & 0.4800 & 60.00 & 0.4000 & 54.00 & 0.4600 & 57.00 & 0.4300 \\
& \textit{All}
& 60.50 & 0.3950
& 51.50 & 0.4850
& \underline{63.00} & \underline{0.3700}
& \textbf{64.50} & \textbf{0.3550}
& \textbf{64.50} & \textbf{0.3550} \\
\midrule
\multirow{3}{*}{Maximum Flow}
& BA & 26.09 & 0.5746 & 21.74 & 0.6348 & 26.09 & 0.5971 & 30.43 & 0.5387 & 21.74 & 0.5522 \\
& ER & 15.00 & 0.7248 & 29.00 & 0.5377 & 10.00 & 0.7832 & 16.00 & 0.7032 & 11.00 & 0.7632 \\
& \textit{All}
& 17.07 & 0.6967
& \textbf{27.64} & \textbf{0.5558}
& 13.01 & 0.7484
& \underline{18.70} & \underline{0.6724}
& 13.01 & 0.7237 \\
\midrule
\multirow{3}{*}{Shortest Path}
& BA & 13.04 & 0.6232 & 8.70 & 0.6087 & 8.70 & 0.6377 & 8.70 & 0.6377 & 8.70 & 0.6377 \\
& ER & 14.00 & 0.6586 & 27.00 & 0.4347 & 9.00 & 0.7071 & 14.00 & 0.6653 & 10.00 & 0.6944 \\
& \textit{All}
& \underline{13.82} & \underline{0.6520}
& \textbf{23.58} & \textbf{0.4672}
& 8.94 & 0.6941
& 13.01 & 0.6602
& 9.76 & 0.6838 \\
\midrule
\multirow{3}{*}{Triangle Count}
& BA & 4.00 & 0.8557 & 4.00 & 0.8860 & 1.00 & 0.8616 & 6.00 & 0.8471 & 1.00 & 0.8818 \\
& ER & 10.00 & 0.7703 & 2.00 & 0.8686 & 5.00 & 0.7024 & 8.00 & 0.7930 & 7.00 & 0.6888 \\
& \textit{All}
& \textbf{7.00} & 0.8130
& 3.00 & 0.8773
& 3.00 & \textbf{0.7820}
& \textbf{7.00} & \underline{0.8201}
& \underline{4.00} & \underline{0.7853} \\
\bottomrule
\end{tabular}
\caption{Per-task reasoning performance for \texttt{gpt-3.5-turbo}, stratified by graph type (BA, ER) and aggregated (\textit{All}). Accuracy is higher-is-better; MAE is lower-is-better. Bold and underline indicate best and second-best methods, respectively, computed only on aggregated (\textit{All}) rows.}
\label{tab:reasoning-by-graphtype-gpt35}
\end{table*}
% \section{Example Appendix}
\label{sec:appendix}

\end{document}